\theoremstyle{plain}
\newtheorem{theorem}{Theorem}[section]
\newtheorem{lemma}[theorem]{Lemma}
\theoremstyle{definition}
\theoremstyle{remark}
\definecolor{myred}{HTML}{F54254}
\definecolor{myorange}{HTML}{FFB135}
\definecolor{mygreen}{HTML}{10BD35}
\definecolor{myblue}{HTML}{598BE7}
\definecolor{mypurple}{HTML}{9A1C6B}
\definecolor{plgray}{HTML}{999999}
\renewcommand{\tt}[1]{\texttt{#1}}
\renewcommand{\phi}{\varphi}
\newcommand{\pl}[1]{{\color{plgray} #1}}
\newcommand{\methodname}{TRL\xspace}
\setlist[itemize]{itemsep=1pt, leftmargin=20pt}
\newcommand{\cutsectionup}{\vspace{-0pt}}
\newcommand{\cutsectiondown}{\vspace{-0pt}}
\title{Transitive RL: \\ Value Learning via Divide and Conquer}
\author{Seohong Park\thanks{Equal contribution.}\:\:$^{1}$ \quad Aditya Oberai\footnotemark[1]\:\:$^{1}$ \quad Pranav Atreya$^{1}$ \quad Sergey Levine$^{1}$ \\
$^{1}$University of California, Berkeley \\
\texttt{\{seohong, aoberai\}@berkeley.edu}
}
\begin{document}

\maketitle

\begin{abstract}
In this work, we present Transitive Reinforcement Learning (\methodname), a new value learning algorithm based on a divide-and-conquer paradigm.
\methodname is designed for offline goal-conditioned reinforcement learning (GCRL) problems,
where the aim is to find a policy that can reach any state from any other state in the smallest number of steps.
\methodname converts a triangle inequality structure present in GCRL into a practical divide-and-conquer value update rule.
This has several advantages compared to alternative value learning paradigms.
Compared to temporal difference (TD) methods, \methodname suffers less from bias accumulation, as in principle it only requires $O(\log T)$ recursions
(as opposed to $O(T)$ in TD learning) to handle a length-$T$ trajectory.
Unlike Monte Carlo methods, \methodname suffers less from high variance as it performs dynamic programming.
Experimentally, we show that \methodname achieves the best performance in highly challenging, long-horizon benchmark tasks
compared to previous offline GCRL algorithms.

Blog post: \url{https://seohong.me/blog/rl-without-td-learning}
\end{abstract}

\begin{figure}[h!]
    \centering
    \includegraphics[width=1.0\textwidth]{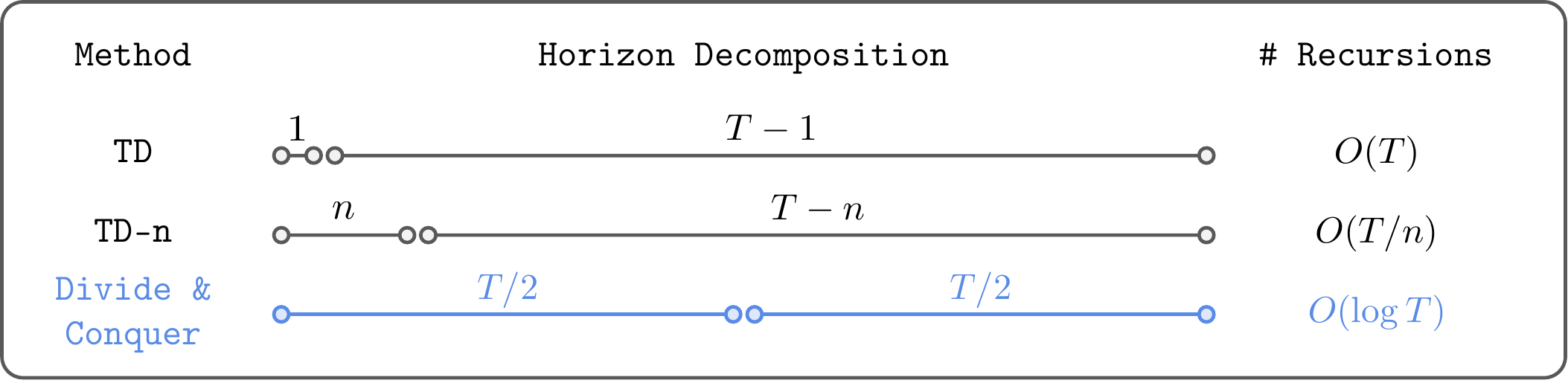}
    \caption{
    \footnotesize
    \textbf{Transitive RL.}
    Transitive RL is based on the divide-and-conquer paradigm,
    which can in theory reduce the number of Bellman recursions to $O(\log T)$ in the best case,
    unlike TD-based methods.
    }
    \label{fig:teaser}
\end{figure}

\cutsectionup
\section{Introduction}
\cutsectiondown
\label{sec:intro}

A fundamental challenge in off-policy reinforcement learning (RL) is the curse of horizon~\citep{horizon_liu2018}.
In off-policy RL, we typically train a value function using (some variant of) temporal difference (TD) learning with Bellman backups,
$Q(s, a) \gets r(s, a) + \gamma \max_{a'} \bar Q(s', a')$,
and train a policy to maximize the learned value function.
The problem is that,
each Bellman update involves regression toward a \emph{biased} target value,
and these biases \emph{accumulate} over the horizon through recursion.
This bias accumulation is one of the main obstacles hindering the scaling of off-policy RL to complex, long-horizon tasks~\citep{sharsa_park2025}.

A standard technique to mitigate this bias accumulation issue in practice
is to mix Monte Carlo (MC) returns,
either for short-horizon trajectory chunks (\eg, $n$-step returns)
or even for full trajectories (\eg, purely MC-based approaches).
While these approaches do mitigate bias accumulation,
they now suffer from \emph{higher variance} as the Monte Carlo horizon grows.
Sometimes, it is possible to strike a balance in the bias-variance tradeoff
using $n$-step returns with an appropriately tuned $n$~\citep{rl_sutton2005, sharsa_park2025}.
However, this does not \emph{fundamentally} solve the curse of horizon,
as it reduces the horizon length only by a constant factor ($n$).
We also need to carefully tune $n$ for each task,
and it requires optimality in length-$n$ trajectory chunks.

Is there an ``ideal'' off-policy RL method that scales to arbitrarily long-horizon tasks,
being free from the curse of horizon?
Our main hypothesis in this work is that a \textbf{divide-and-conquer} algorithm
might provide a path to an answer.
By divide-and-conquer, we mean an algorithm that incrementally learns values by combining two equal-sized trajectory segments into a larger one.
In principle, such an algorithm might be able to \emph{logarithmically} reduce the number of biased recursions, unlike TD-based methods,
while not suffering from high variance, unlike MC-based methods (\Cref{fig:teaser}).

As an initial step to verify this hypothesis,
we focus on an important subset of RL problems, offline goal-conditioned RL (GCRL)~\citep{gcrl_kaelbling1993, ogbench_park2025},
which provides a structure that is naturally amenable to divide and conquer.
Goal-conditioned RL aims to find a policy that can reach any state from any other state in the smallest number of steps.
By its shortest-path-like nature, it satisfies a triangle inequality,
which we can exploit to develop a divide-and-conquer algorithm.

In this work, we propose a new divide-and-conquer-based offline goal-conditioned RL algorithm,
\textbf{Transitive RL (\methodname)}, for long-horizon goal-reaching tasks.
There are several scaling challenges with using the triangle inequality for offline value updates.
We address these challenges with newly introduced ideas,
such as in-trajectory subgoals and distance re-weighting.
Empirically, we show that \methodname often exhibits better performance than previous TD- and MC-based approaches on
highly complex, long-horizon robotics tasks over $1000$ environment steps.
To the best of our knowledge, \methodname is the first divide-and-conquer value learning method
that scales to long-horizon robotic tasks beyond toy domains.

\cutsectionup
\section{Related Work}
\cutsectiondown
\label{sec:related}

\textbf{Offline RL and offline goal-conditioned RL.}
Much work has studied the offline reinforcement learning problem,
which aims to find the best return-maximizing policy within the support of the offline data.
Most works distinguish themselves from the way they mitigate value overestimation for out-of-distribution actions,
usually either imposing conservatism constraints on the value function~\citep{cql_kumar2020},
behavior regularizing the actor~\citep{brac_wu2019, td3bc_fujimoto2021, rebrac_tarasov2023, bdpo_gao2025, fql_park2025},
or implicitly performing Bellman updates without querying out-of-distribution actions~\citep{iql_kostrikov2022, sql_xu2023, xql_garg2023}.
In offline goal-conditioned RL, whose aim is to train a multi-task policy that can reach any state,
previous methods either extend offline RL algorithms to the goal-conditioned setting~\citep{wgcsl_yang2022, hiql_park2023}
or leverage specific structures of the goal-conditioned RL problem
with quasimetric learning~\citep{qrl_wang2023},
hierarchical policy extraction~\citep{hiql_park2023},
or probabilistic interpretation~\citep{cl_eysenbach2021, crl_eysenbach2022, tdinfonce_zheng2024}.
In this work, we propose a new goal-conditioned RL algorithm based on a triangle inequality,
as discussed in the next paragraph.

\textbf{The triangle inequality in GCRL.}
Our main idea in this work is based on the triangle inequality in goal-conditioned RL (\Cref{eq:tri_d}).
The idea of using the triangle inequality
dates back to the very first work in goal-conditioned RL by \citet{gcrl_kaelbling1993}.
Since then, several prior works have leveraged this structure in three main ways.
``Hard'' approaches strictly impose the triangle inequality via an architectural quasimetric constraint~\citep{qrl_wang2023, cmd_myers2024},
``soft'' approaches use the triangle inequality as a value backup~\citep{gcrl_kaelbling1993, fwrl_dhiman2018, sgt_jurgenson2020, coe_piekos2023},
and planning-based approaches employ this inequality for planning~\citep{sorb_eysenbach2019, dcmcts_parascandolo2020, sgt_jurgenson2020, sir_li2021}.
Among planning-free off-policy RL methods,
while hard approaches have proven effective in practice so far (\eg, QRL~\citep{qrl_wang2023}),
soft approaches (to our knowledge) have not yet scaled beyond $2$-D toy environments in the absence of additional planning~\citep{gcrl_kaelbling1993, fwrl_dhiman2018, sgt_jurgenson2020, coe_piekos2023}.

Our method, \methodname, is based on the soft triangle inequality; \ie, we use the triangle inequality for value backups, instead of hard quasimetric constraints.
However, unlike previous soft approaches, we show that \methodname scales to
highly complex, long-horizon robotic tasks with $1$B-sized datasets,
even outperforming hard (\ie, quasimetric-based) approaches.
This is made possible by our key techniques (in-sample maximization within in-sample subgoals; see \Cref{sec:idea}),
which prevent value overestimation when applying the triangle inequality in practice. 

\cutsectionup
\section{Preliminaries}
\cutsectiondown
\label{sec:prelim}

\textbf{Problem setting.}
We consider a controlled Markov process defined by a tuple $\gM = (\gS, \gA, p)$,
where $\gS$ is the state space,
$\gA$ is the action space,
and $p(\pl{s'} \mid \pl{s}, \pl{a}): \gS \times \gA \to \Delta(\gS)$ is the transition dynamics distribution.
In the definition above, $\Delta(\gX)$ denotes the set of probability distributions on a space $\gX$
and placeholder variables are denoted in gray.
We assume that we are given an unlabeled dataset $\gD = \{\tau^{(i)}\}_{i}$
consisting of length-$T$ reward-free trajectories $\tau = (s_0, a_0, s_1, \ldots, s_T)$
(in practice, different trajectories can have different lengths, but we assume they have the same length for the simplicity of discussion).

In this work, we aim to develop a better recipe for offline goal-conditioned RL.
Offline GCRL aims to learn a goal-conditioned policy $\pi(\pl{a} \mid \pl{s}, \pl{g}): \gS \times \gS \to \gA$
that maximizes the objective
$\tilde V^\pi(s, g)=\E_{\tau \sim p^\pi(\pl{\tau} \mid s, g)}[\sum_{t=0}^T \gamma^t \sI(s_t = g)]$ for all $s, g \in \gS$
purely from the dataset $\gD$ (without additional environment interactions),
where $\mathbb{I}(\cdot)$ denotes the $0$-$1$ indicator function,\footnote{We assume that the state space is discrete for notational simplicity. This is only to simplify mathematical notation, and most discussions in this paper can be directly extended to continuous state spaces as well with appropriate measure-theoretic formulations~\citep{sm_blier2021}.}
$\gamma \in (0, 1)$ denotes the discount factor,
and $p^\pi(\tau \mid s, g) = \pi(a_0 \mid s_0, g)p(s_1 \mid s_0, a_0) \cdots p(s_T \mid s_{T-1}, a_{T-1})$ where $s_0 = s$.
Here, we note that states and goals live in the same space $\gS$,
and the reward function is simply given as the $0$-$1$ sparse reward function.
We also consider a variant of the above goal-conditioned RL objective defined by hitting times,
where the agent enters an absorbing state upon reaching the goal (\ie, gets a reward of $1$ at most once)~\citep{qrl_wang2023}.
We denote the corresponding hitting-time-based value function as $V^\pi(\pl{s}, \pl{g})$.
In this work, we mainly consider this hitting-time variant, as it has a close connection to temporal distances, as discussed in the next paragraph.

As commonly done by many previous works in goal-conditioned RL~\citep{sorb_eysenbach2019, diffuser_janner2022, icvf_ghosh2023, hiql_park2023, qrl_wang2023, hilp_park2024},
we assume that the environment dynamics are deterministic, unless mentioned otherwise.
In deterministic environments, we can define the notion of temporal distances.
The temporal distance $d^*(s, g)$ from $s \in \gS$ to $g \in \gS$ 
is defined as the minimum number of steps to reach $g$ from $s$ ($\infty$ if it is not reachable).
From the definition, it is straightforward to see that $V^*(s, g) := \max_\pi V^\pi (s, g) = \gamma^{d^*(s, g)}$ in the deterministic case.

\subsection{Previous Approaches to Offline GCRL}

An offline RL algorithm typically consists of two components: value estimation and policy extraction.
In this section, we review two representative paradigms for goal-conditioned value learning (MC and TD),
and two standard techniques for policy extraction.

\textbf{Monte Carlo (MC) value estimation.}
Monte Carlo value estimation is one of the simplest techniques for goal-conditioned value learning.
It trains a behavioral value function based on the distances achieved in dataset trajectories.
Among MC-based approaches, distance regression methods~\citep{mbold_tian2021, recon_shah2021}
fit a Q function $Q(\pl{s}, \pl{a}, \pl{g}): \gS \times \gA \times \gS \to \sR$
with the following loss:
\begin{align}
    L^\mathrm{MC}(Q) = \E_{\substack{\tau \sim \gD, \\ 0 \leq i \leq j < T}}\left[(Q(s_i, a_i, s_j) - \gamma^{j-i})^2\right],
    \label{eq:mc}
\end{align}
where $\tau$ is sampled uniformly from the dataset, and $i$ and $j$ are sampled uniformly from the set $\{0, 1, \ldots, T-1\}$.
While this objective is biased when either the dynamics or the data-collecting policy is stochastic~\citep{dist_akella2023},
it is simple, stable, and often performs surprisingly well in practice~\citep{mbold_tian2021, recon_shah2021, dwsl_hejna2023}.

\textbf{Temporal difference (TD) value learning.}
Another class of methods employs temporal difference learning to learn a goal-conditioned value function.
As an example, goal-conditioned implicit Q-learning (GCIQL)~\citep{iql_kostrikov2022, hiql_park2023} fits
value functions with the following losses:
\begin{align}
    L^\mathrm{IQL-V}(V) &= \E_{s, a, g \sim \gD}
    \left[ \ell_\kappa^2(V(s, g) - \bar Q(s, a, g)) \right], \label{eq:iql_v} \\
    L^\mathrm{IQL-Q}(Q) &= \E_{s, a, s', g \sim \gD}
    \left[(Q(s, a, g) - \mathbb{I}(s=g) - \gamma V(s', g))^2\right], \label{eq:iql_q}
\end{align}
where goals are typically sampled with hindsight relabeling~\citep{her_andrychowicz2017, ogbench_park2025},
$\ell_\kappa^2$ denotes the expectile loss with a parameter $\kappa \in [0.5, 1)$, $\ell_\kappa^2(x) = |\kappa - \mathbb{I}(x < 0)|x^2$,
and $\bar Q$ denotes the target Q function~\citep{dqn_mnih2013}.
The asymmetric expectile regression in \Cref{eq:iql_v} approximates the $\max$ operator in the Bellman update
($V(s, g) \gets \max_{a \in \gA} Q(s, a, g)$) only with in-sample actions, avoiding querying the Q function with out-of-distribution actions.
We note that when $\kappa = 0.5$, the above objective yields
a behavioral (SARSA) goal-conditioned value function~\citep{onestep_brandfonbrener2021, iql_kostrikov2022, sharsa_park2025}.

\textbf{Policy extraction.}
After learning a goal-conditioned value function,
the next step is to train a policy to maximize the learned values.
This procedure is called policy extraction, and we describe two standard techniques below.

Reparameterized gradients~\citep{td3bc_fujimoto2021} extract a Gaussian policy by maximizing the following ``DDPG+BC'' objective~\citep{bottleneck_park2024}:
\begin{align}
J^\mathrm{DDPG+BC}(\pi) = \E_{\substack{s, a, g \sim \gD, \\ a^\pi \sim \pi(\pl{a} \mid s, g)}}[Q(s, a^\pi, g) + \alpha \log \pi(a \mid s, g)], \label{eq:ddpgbc}
\end{align}
where $a^\pi$ is a reparameterized sample from the policy and $\alpha$ is the strength of the BC constraint.
Intuitively, this maximizes the learned value function while not deviating too much from the dataset distribution to prevent out-of-distribution exploitation.

Rejection sampling~\citep{emaq_ghasemipour2021, sfbc_chen2023, idql_hansenestruch2023}
simply defines a policy to be the $\argmax$ of behavioral action samples that maximize the value function:
\begin{align}
    \pi(s, g) \stackrel{d}{=} \argmax_{a_1, \cdots, a_N: a_i \sim \pi^\beta(\pl{a} \mid s, g)} Q(s, a_i, g), \label{eq:rejection}
\end{align}
where $N$ denotes the number of samples, $\stackrel{d}{=}$ denotes equality in distribution,
and $\pi^\beta$ denotes a goal-conditioned BC policy.
Typically, the BC policy is modeled by an expressive generative model (\eg, diffusion models~\citep{diffusion_sohl2015, ddpm_ho2020} and flow matching~\citep{flow_lipman2024})
so that it can effectively capture the potentially multi-modal distributions of the behavioral policy~\citep{sfbc_chen2023, idql_hansenestruch2023, sharsa_park2025}.

\cutsectionup
\section{Transitive RL}
\cutsectiondown

As motivated in \Cref{sec:intro},
our main aim is
to develop a practical \emph{divide-and-conquer} value learning algorithm for offline goal-conditioned RL.
Offline goal-conditioned RL provides a natural shortest-path structure that is amenable to divide and conquer.
Namely, in deterministic environments,%
\footnote{While we mainly consider deterministic environments in this work, we note that the GCRL problem exhibits a similar triangle-inequality structure in stochastic environments as well; see \citet{cmd_myers2024}.}
the temporal distance function $d^*(\pl{s}, \pl{g})$
always satisfies the following triangle inequality~\citep{gcrl_kaelbling1993, fwrl_dhiman2018, qrl_wang2023, coe_piekos2023}:
\begin{align}
    d^*(s, g) \leq d^*(s, w) + d^*(w, g) \label{eq:tri_d}
\end{align}
for all $s, w, g \in \gS$.
The equality holds when $w$ is on a shortest path from $s$ to $g$.
Throughout this paper, we call $w$ a \emph{subgoal}.

Equivalently, from the relation $V^*(s, g) = \gamma^{d^*(s, g)}$,
we can rewrite this in terms of the optimal value function $V^*(\pl{s}, \pl{g})$ as follows:
\begin{align}
    V^*(s, g) \geq V^*(s, w) V^*(w, g). \label{eq:tri_v}
\end{align}
This motivates the following \emph{transitive} Bellman update rule for a value function $V(\pl{s}, \pl{g})$:
\begin{align}
    V(s, g) \gets
    \begin{dcases*}
    \gamma^0 & if $s = g$, \\
    \gamma^1 & if $(s, g) \in \gE$, \\
    \max_{w \in \gS} V(s, w)V(w, g) & otherwise,
    \end{dcases*} \label{eq:trl_v}
\end{align}
where $\gE$ (the edge set) denotes
the set of $(s, s') \in \gS \times \gS$ pairs
such that $s \neq s'$ and $s'$ is reachable from $s$ by a single action.
If we initialize the value function with $V(s, g) \leq 0$ for all $s, g \in \gS$,
a standard result in computer science (\eg, the Floyd-Warshall algorithm) implies that
repeatedly applying this operator converges to the optimal value function $V^*(\pl{s}, \pl{g})$~\citep{fwrl_dhiman2018, coe_piekos2023}.

We also have a variant of the above update rule for an action-value function $Q(\pl{s}, \pl{a}, \pl{g})$:
\begin{align}
    Q(s, a, g) \gets
    \begin{dcases*}
    \gamma^0 & if $s = g$, \\
    \gamma^1 & if $g = p(s, a)$ and $s \neq g$, \\
    \max_{w \in \gS, a' \in \gA} Q(s, a, w)Q(w, a', g) & otherwise,
    \end{dcases*} \label{eq:trl_q}
\end{align}
where we slightly abuse the notation to denote $p$ as the deterministic dynamics function.

\textbf{Why do we want to use the triangle inequality?}
As described in \Cref{sec:intro},
the main benefit of the triangle inequality is that we can perform divide and conquer
instead of TD or MC value learning.
Unlike TD learning, which requires $O(T)$ Bellman recursions,
divide and conquer can reduce the number of Bellman recursions to $O(\log T)$ in the best case
(in theory, with optimally chosen subgoals).
Unlike MC learning, which suffers from high variance with long horizons,
divide and conquer performs dynamic programming (\ie, aggregates intermediate values)
and thus mitigates the variance issue.
Hence, we expect that the use of the triangle inequality
potentially leads to a scalable value learning algorithm that scales better in \emph{long-horizon} tasks.

\subsection{The Challenge}

While the transitive Bellman update rule (\Cref{eq:trl_v}) provides
a natural way to perform divide and conquer for goal-conditioned value learning,
directly implementing this idea in practice is not straightforward.
The main problem is the $\max_{w \in \gS}$ operator in \Cref{eq:trl_v}.
While we can simply iterate over the set of states to compute the maximum in the tabular case,
doing so would lead to (potentially catastrophic) value overestimation under function approximation, because the $\max$ operator over a large number of subgoals can easily exploit any single subgoal's positively biased value estimation error.
This issue is further exacerbated in the offline setting,
where value overestimation poses a particularly severe challenge~\citep{offline_levine2020}.

To address this challenge, previous works have mainly considered two practical solutions.
\citet{sgt_jurgenson2020} use a previous iteration of the value network (similar to target networks~\citep{dqn_mnih2013})
for the target value in \Cref{eq:trl_v}.
\citet{coe_piekos2023} train a separate generator network $G(\pl{s}, \pl{a}, \pl{g}): \gS \times \gA \times \gS \to \gS$
to approximate the $\max_{w \in \gS}$ operator, similar to DDPG~\citep{ddpg_lillicrap2016}.
However, as we will show in our experiments,
these techniques are not sufficient to stabilize training,
leading to zero success rates in most simulated robotic benchmark tasks. 
Indeed, most prior works that use the transitive Bellman update without explicit planning
have demonstrated their methods only on $2$-D toy tasks, such as point mazes and discrete grid worlds~\citep{gcrl_kaelbling1993, fwrl_dhiman2018, sgt_jurgenson2020, coe_piekos2023}.

\subsection{The Idea}
\label{sec:idea}

Our key idea to handle the overestimation issue in the $\max$ operator
is to perform \emph{in-sample maximization} using only \emph{in-trajectory subgoals}.
Specifically, we apply the following two modifications.%
\footnote{In this section, for notational simplicity, we describe our high-level idea based on the V version of the transitive Bellman operator (\Cref{eq:trl_v}).
In practice, we use the Q version (\Cref{eq:trl_q}).}

First, we replace the strict $\max$ operator in \Cref{eq:trl_v} with soft expectile regression~\citep{exp_newey1987, iql_kostrikov2022}.
That is, we (conceptually) minimize the following loss:
\begin{align}
    \E[\ell^2_\kappa (V(s, g) - \bar V(s, w) \bar V(w, g))], \label{eq:trl_v_exp}
\end{align}
where $\ell^2_\kappa$ denotes the expectile loss described in \Cref{sec:prelim},
and $\bar V$ denotes the target network~\citep{dqn_mnih2013}.
This expectile loss allows us to approximate the maximum in \Cref{eq:trl_v}
without having to iterate over the states explicitly.

Second, we only consider \emph{in-trajectory} states as potential subgoals (we call them \emph{behavioral subgoals}).
That is, for a dataset trajectory $\tau = (s_0, a_0, s_1, \ldots, s_T)$,
we update $V(s_i, s_j)$ only using $V(s_i, s_k)V(s_k, s_j)$ with $i \leq k \leq j$,
instead of considering arbitrary states as subgoals.
While this decision might seem restrictive at first glance,
we find that restricting to behavioral subgoals is crucial
to make the transitive Bellman update work in practice (\Cref{sec:exp_abl}).
Otherwise, the probability of an arbitrary state being a ``valid'' subgoal is low,
and thus we would need much more aggressive maximization (\ie, a higher expectile in \Cref{eq:trl_v_exp}),
which leads to unstable training.
Moreover, we experimentally find that
behavioral subgoals can still be highly effective
even when the dataset consists of uniformly random atomic motions.
This is analogous to how behavioral value functions (\ie, one-step RL)
are often sufficient in practice~\citep{onestep_brandfonbrener2021, crl_eysenbach2022, sharsa_park2025}
even on suboptimal datasets.

\subsection{Practical Recipe}

Based on the high-level idea in \Cref{sec:idea},
we now describe the full recipe of our method,
which we call \textbf{Transitive RL (\methodname)}.
\methodname trains a goal-conditioned action-value function $Q(\pl{s}, \pl{a}, \pl{g}): \gS \times \gA \times \gS \to \sR$
based on the Q-based transitive Bellman operator (\Cref{eq:trl_q}),
and then extracts a goal-conditioned policy $\pi(\pl{a} \mid \pl{s}, \pl{g}): \gS \times \gS \to \Delta(\gA)$.

\subsubsection{Value Learning}

\methodname's value learning is based on the high-level idea described in \Cref{sec:idea}.
In practice, we additionally employ the following technique to further improve performance and stability.

\textbf{Distance-based re-weighting.}
In transitive Bellman updates,
the accuracy of the target value for a longer trajectory chunk ($s_i$ to $s_j$)
depends on those of two shorter trajectory chunks ($s_i$ to $s_k$ and $s_k$ to $s_j$).
Hence, it is particularly important to keep the shorter trajectory chunks' values accurate.
To this end, we weigh the loss for each sample $(s_i, s_j)$ in the batch
by $w(s_i, s_j) := 1/(1 + \log_\gamma Q(s_i, a_i, s_j))^\lambda$,
where $\lambda$ is a hyperparameter.
This adjusts the weight for each chunk to be (roughly) inversely proportional to its estimated distance, thus focusing more on shorter trajectory chunks.
This bears similarity to classical dynamic programming where smaller subproblems are solved before larger ones.
With this technique, the final value loss for \methodname is defined as follows:
\begin{align}
L^\mathrm{TRL}(Q) = 
\E_{\tau \sim \gD}\left[w(s_i, s_j) D_\kappa\left(Q(s_i, a_i, s_j), \bar Q(s_i, a_i, s_k) \bar Q(s_k, a_k, s_j)\right)\right].
\label{eq:trl_loss}
\end{align}
where $D$ denotes a loss function (\eg, squared regression, binary cross-entropy, etc.),
and $D_\kappa$ denotes its expectile variant, $D_\kappa(x, y) := |\kappa - \mathbb{I}(x > y)| D(x, y)$.
In our experiments, we use the binary cross-entropy (BCE) loss,
following prior work in goal-conditioned RL~\citep{qtopt_kalashnikov2018, crl_eysenbach2022, sharsa_park2025}.
Also, to handle the base cases,
we replace $\bar Q(s_i, a_i, s_k)$ with $\gamma^{k-i}$ if $k-i \leq 1$
and $\bar Q(s_k, a_k, s_j)$ with $\gamma^{j-k}$ if $j-k \leq 1$
in the above loss.

\subsubsection{Policy Extraction}
After training a goal-conditioned value function,
we extract a policy to maximize the learned value.
In \methodname, we consider two policy extraction methods,
reparameterized gradients and rejection sampling (\Cref{sec:prelim}).
By default, we use reparameterized gradients,
but we find rejection sampling to work better on long-horizon puzzle tasks,
where the behavioral policy is highly multi-modal.

We provide pseudocode for \methodname in \Cref{alg:trl}.

\begin{tcolorbox}[
  enhanced,
  breakable,
  float,
  floatplacement=h,
  title=\textbf{Digression:} Does random midpoint sampling still lead to logarithmic Bellman recursions?,
  colframe=myblue,
  colback=myblue!8,
  coltitle=white,
  parbox=false,
  left=5pt,
  right=5pt,
  toprule=2pt,
  titlerule=1pt,
  leftrule=1pt,
  rightrule=1pt,
  bottomrule=1pt,
]
While we motivated divide-and-conquer value learning from its logarithmic dependency on the horizon in \Cref{sec:intro},
\methodname does not always divide a trajectory into two equal-sized chunks.
It instead \emph{samples} a random subgoal $s_k$ between $s_i$ and $s_j$.
One might wonder whether \methodname's random ``sampling'' variant still gives us $O(\log T)$ recursions.
The answer is still yes (in the ideal, tabular case), and we provide a proof in \Cref{sec:proofs}.
\end{tcolorbox}

\begin{algorithm}[t!]
\caption{Transitive Reinforcement Learning (\methodname)}
\label{alg:trl}
\begin{algorithmic}
\footnotesize

\State
\LComment{\color{myblue} Value learning}
\State Initialize value function $Q(\pl{s}, \pl{a}, \pl{g})$, policy $\pi(\pl{a} \mid \pl{s}, \pl{g})$
\While{not converged}
\State Sample $\tau = (s_0, a_0, s_1, \ldots, s_T) \sim \gD$
\State Sample $i, j \sim \mathrm{Unif}(\{0, 1, \ldots, T-1\})$ such that $i < j$
\State Sample $k \sim \mathrm{Unif}(\{i, i+1, \ldots, j-1\})$
\State Train $Q$ by minimizing $L^\mathrm{\methodname}(Q)$ (\Cref{eq:trl_loss})
\EndWhile

\vspace{5pt}

\LComment{\color{myblue} Policy extraction (can be run in parallel with above)}
\State Extract $\pi$ using either reparameterized gradients (\Cref{eq:ddpgbc}) or rejection sampling (\Cref{eq:rejection})

\end{algorithmic}
\end{algorithm}

\section{Experiments}
\label{sec:exp}

In this section,
we empirically answer several research questions about divide-and-conquer value learning and \methodname.
In \Cref{sec:exp_long},
we compare \methodname's divide-and-conquer update rule with previous TD- and MC-based update rules
on large-scale, long-horizon tasks.
In \Cref{sec:exp_standard},
we compare \methodname with previous offline GCRL methods on a standard benchmark suite.
In \Cref{sec:exp_abl},
we provide various ablation studies of \methodname.
We use four random seeds unless otherwise stated, and present $95\%$ confidence intervals in plots and standard deviations in tables.
In tables, we highlight numbers that are at or above $95\%$ of the best performance in blue,
as in \citet{ogbench_park2025}.

\subsection{Is divide and conquer better than TD and MC on long-horizon tasks?}
\label{sec:exp_long}

\begin{wrapfigure}{r}{0.29\textwidth}
    \centering
    \raisebox{0pt}[\dimexpr\height-1.0\baselineskip\relax]{
        \begin{subfigure}[t]{1.0\linewidth}
        \includegraphics[width=\linewidth]{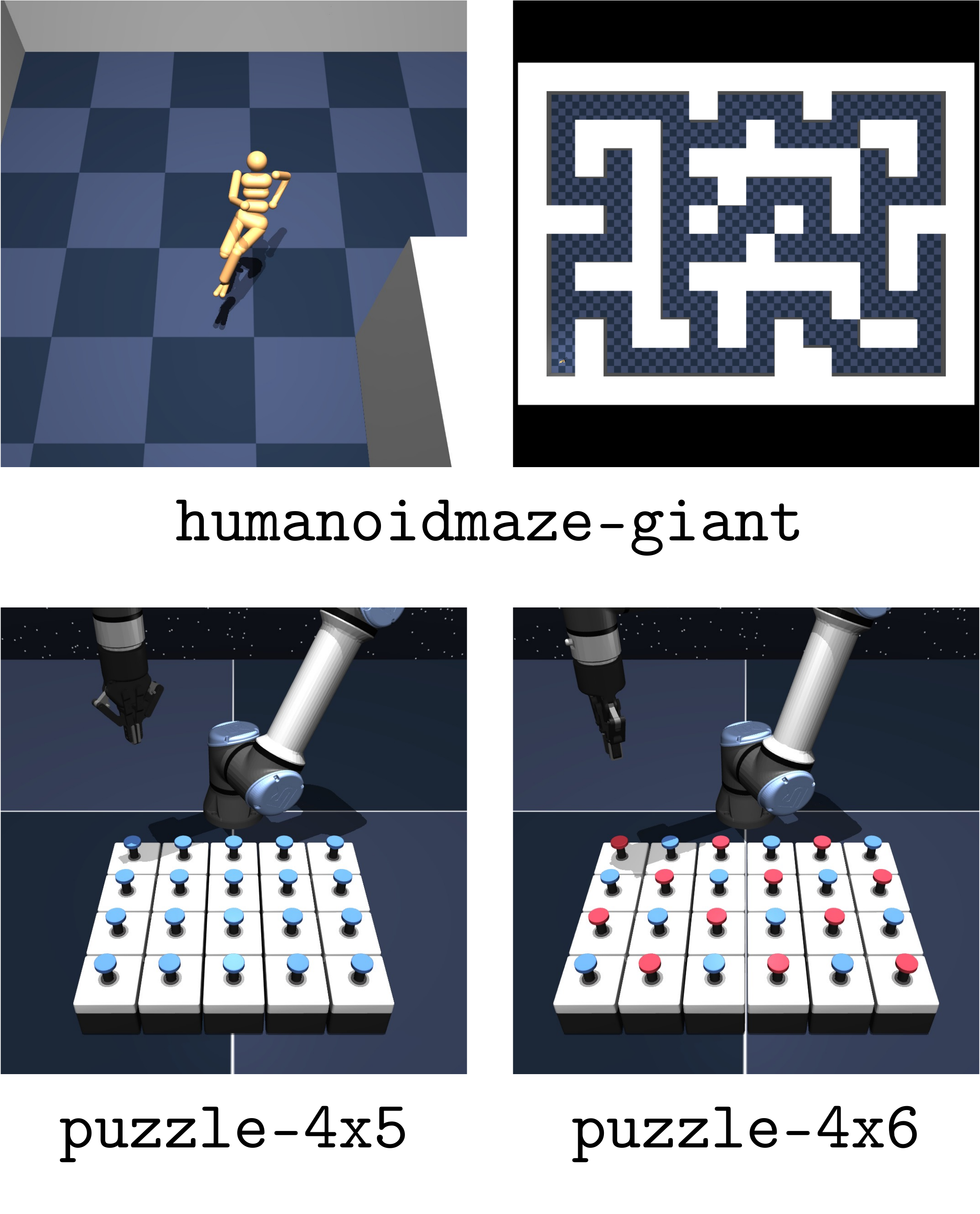}
        \end{subfigure}
    }
    \vspace{-12pt}
    \caption{
    \footnotesize
    \textbf{Long-horizon tasks.}
    }
    \label{fig:1b_envs}
\end{wrapfigure}
Our first goal is to see
whether \methodname's divide-and-conquer value learning algorithm exhibits better scalability to \textbf{long-horizon} problems
than more standard approaches (TD, $n$-step TD, and MC) in practice.
This was our main motivation for \methodname (\Cref{sec:intro}).

\textbf{Tasks.} To empirically answer this question,
we employ three highly complex, long-horizon environments used in a recent horizon scaling study by \citet{sharsa_park2025}:
\tt{humanoidmaze-giant}, \tt{puzzle-4x5}, and \tt{puzzle-4x6} (\Cref{fig:1b_envs}).\footnote{%
We omit \tt{cube-octuple} from \citet{sharsa_park2025}, as none of the methods achieve non-trivial performance on this task without hierarchical \emph{policies}, which are orthogonal to the focus of this paper (\ie, value learning).
}
In \tt{humanoidmaze-giant}, the agent must control a humanoid robot with $21$ degrees of freedom
to navigate a large maze.
In \tt{puzzle-4x5} and \tt{puzzle-4x6}, the agent must control a robot arm
to press buttons to solve the ``Lights Out'' puzzle.
Each environment provides five tasks (\ie, five state-goal pairs) for evaluation.
These tasks are highly challenging and have long horizons:
for example, the test-time task horizon of \tt{humanoidmaze-giant} is $4000$.
For datasets, we employ the $1$B-sized ones used by \citet{sharsa_park2025}.

\textbf{Methods.} In these environments,
we mainly compare the divide-and-conquer value update rule of \methodname with existing value learning paradigms.
\begin{itemize}
\item \textbf{TRL} (ours), which employs divide-and-conquer value learning (\Cref{eq:trl_loss}).
\item \textbf{TD} and \textbf{TD-$\bm{n}$}, which employ ($n$-step) temporal difference value learning (\Cref{sec:methods}).
\item \textbf{MC}, which employs Monte Carlo value learning (\Cref{eq:mc}).
\end{itemize}

To conduct a controlled experiment that solely focuses on the value update rules,
we fix the other hyperparameters, \eg, goal hindsight relabeling ratios and policy extraction methods, except for the BC coefficient $\alpha$ in DDPG+BC (\Cref{eq:ddpgbc}), which we individually tune for each method and task.
This makes the results fully compatible, enabling an apples-to-apples comparison of different value learning paradigms.

Additionally, for reference, we also evaluate several standard offline GCRL algorithms on these long-horizon tasks.
We consider four methods considered in the work by \citet{sharsa_park2025},
flow BC (FBC)~\citep{diffusionpolicy_chi2023},
IQL~\citep{iql_kostrikov2022},
CRL~\citep{crl_eysenbach2022},
and SAC+BC~\citep{sharsa_park2025},
and three existing methods that employ the soft or hard triangle inequality,
QRL~\citep{qrl_wang2023},
TDP~\citep{gcrl_kaelbling1993, fwrl_dhiman2018, sgt_jurgenson2020},
and COE~\citep{coe_piekos2023}.
Among them, IQL and SAC+BC are based on TD learning, and CRL is based on MC learning.

\begin{table}[t!]
\caption{
\footnotesize
\textbf{Results on large-scale, \emph{long-horizon} tasks.}
TRL achieves the best performance on highly challenging, long-horizon benchmark tasks that require up to $3000$ environment steps.
}
\label{table:1b}
\centering
\scalebox{0.79}
{

\begin{tabularew}{ll*{10}{>{\spew{.5}{+1}}r@{\,}l}}
\toprule
\multicolumn{1}{l}{\tt{Environment}} & \multicolumn{1}{l}{\tt{Task}} & \multicolumn{2}{c}{\tt{FBC}} & \multicolumn{2}{c}{\tt{IQL}} & \multicolumn{2}{c}{\tt{CRL}} & \multicolumn{2}{c}{\tt{SAC+BC}} & \multicolumn{2}{c}{\tt{QRL}} & \multicolumn{2}{c}{\tt{TDP}} & \multicolumn{2}{c}{\tt{COE}} & \multicolumn{2}{c}{\tt{TD}} & \multicolumn{2}{c}{\tt{MC}} & \multicolumn{2}{c}{\tt{\color{myblue}TRL}} \\
\midrule
\multirow[c]{6}{*}{\tt{humanoidmaze-giant}} & \tt{task1} & \tt{0} &{\tiny $\pm$\tt{0}} & \tt{0} &{\tiny $\pm$\tt{0}} & \tt{52} &{\tiny $\pm$\tt{38}} & \tt{5} &{\tiny $\pm$\tt{3}} & \tt{1} &{\tiny $\pm$\tt{1}} & \tt{0} &{\tiny $\pm$\tt{0}} & \tt{1} &{\tiny $\pm$\tt{1}} & \tt{2} &{\tiny $\pm$\tt{4}} & \tt{64} &{\tiny $\pm$\tt{9}} & \tt{\color{myblue}{71}} &{\tiny $\pm$\tt{15}} \\
 & \tt{task2} & \tt{0} &{\tiny $\pm$\tt{0}} & \tt{3} &{\tiny $\pm$\tt{7}} & \tt{63} &{\tiny $\pm$\tt{46}} & \tt{12} &{\tiny $\pm$\tt{3}} & \tt{2} &{\tiny $\pm$\tt{2}} & \tt{3} &{\tiny $\pm$\tt{3}} & \tt{2} &{\tiny $\pm$\tt{2}} & \tt{8} &{\tiny $\pm$\tt{4}} & \tt{\color{myblue}{87}} &{\tiny $\pm$\tt{5}} & \tt{\color{myblue}{87}} &{\tiny $\pm$\tt{6}} \\
 & \tt{task3} & \tt{0} &{\tiny $\pm$\tt{0}} & \tt{5} &{\tiny $\pm$\tt{6}} & \tt{68} &{\tiny $\pm$\tt{46}} & \tt{8} &{\tiny $\pm$\tt{6}} & \tt{2} &{\tiny $\pm$\tt{3}} & \tt{3} &{\tiny $\pm$\tt{2}} & \tt{4} &{\tiny $\pm$\tt{2}} & \tt{1} &{\tiny $\pm$\tt{1}} & \tt{\color{myblue}{83}} &{\tiny $\pm$\tt{8}} & \tt{44} &{\tiny $\pm$\tt{8}} \\
 & \tt{task4} & \tt{0} &{\tiny $\pm$\tt{0}} & \tt{2} &{\tiny $\pm$\tt{3}} & \tt{57} &{\tiny $\pm$\tt{41}} & \tt{2} &{\tiny $\pm$\tt{3}} & \tt{0} &{\tiny $\pm$\tt{0}} & \tt{0} &{\tiny $\pm$\tt{0}} & \tt{1} &{\tiny $\pm$\tt{2}} & \tt{2} &{\tiny $\pm$\tt{2}} & \tt{78} &{\tiny $\pm$\tt{8}} & \tt{\color{myblue}{94}} &{\tiny $\pm$\tt{4}} \\
 & \tt{task5} & \tt{0} &{\tiny $\pm$\tt{0}} & \tt{3} &{\tiny $\pm$\tt{4}} & \tt{68} &{\tiny $\pm$\tt{46}} & \tt{0} &{\tiny $\pm$\tt{0}} & \tt{8} &{\tiny $\pm$\tt{8}} & \tt{4} &{\tiny $\pm$\tt{2}} & \tt{5} &{\tiny $\pm$\tt{2}} & \tt{3} &{\tiny $\pm$\tt{1}} & \tt{84} &{\tiny $\pm$\tt{11}} & \tt{\color{myblue}{99}} &{\tiny $\pm$\tt{1}} \\
 & \tt{overall} & \tt{0} &{\tiny $\pm$\tt{0}} & \tt{3} &{\tiny $\pm$\tt{4}} & \tt{62} &{\tiny $\pm$\tt{42}} & \tt{5} &{\tiny $\pm$\tt{0}} & \tt{3} &{\tiny $\pm$\tt{2}} & \tt{2} &{\tiny $\pm$\tt{0}} & \tt{2} &{\tiny $\pm$\tt{1}} & \tt{3} &{\tiny $\pm$\tt{2}} & \tt{\color{myblue}{79}} &{\tiny $\pm$\tt{4}} & \tt{\color{myblue}{79}} &{\tiny $\pm$\tt{2}} \\
\cmidrule{1-22}
\multirow[c]{6}{*}{\tt{puzzle-4x5}} & \tt{task1} & \tt{0} &{\tiny $\pm$\tt{0}} & \tt{\color{myblue}{100}} &{\tiny $\pm$\tt{0}} & \tt{7} &{\tiny $\pm$\tt{5}} & \tt{\color{myblue}{95}} &{\tiny $\pm$\tt{3}} & \tt{0} &{\tiny $\pm$\tt{0}} & \tt{0} &{\tiny $\pm$\tt{0}} & \tt{0} &{\tiny $\pm$\tt{0}} & \tt{84} &{\tiny $\pm$\tt{7}} & \tt{\color{myblue}{100}} &{\tiny $\pm$\tt{0}} & \tt{\color{myblue}{100}} &{\tiny $\pm$\tt{0}} \\
 & \tt{task2} & \tt{0} &{\tiny $\pm$\tt{0}} & \tt{0} &{\tiny $\pm$\tt{0}} & \tt{0} &{\tiny $\pm$\tt{0}} & \tt{0} &{\tiny $\pm$\tt{0}} & \tt{0} &{\tiny $\pm$\tt{0}} & \tt{0} &{\tiny $\pm$\tt{0}} & \tt{0} &{\tiny $\pm$\tt{0}} & \tt{4} &{\tiny $\pm$\tt{3}} & \tt{67} &{\tiny $\pm$\tt{12}} & \tt{\color{myblue}{99}} &{\tiny $\pm$\tt{1}} \\
 & \tt{task3} & \tt{0} &{\tiny $\pm$\tt{0}} & \tt{0} &{\tiny $\pm$\tt{0}} & \tt{0} &{\tiny $\pm$\tt{0}} & \tt{0} &{\tiny $\pm$\tt{0}} & \tt{0} &{\tiny $\pm$\tt{0}} & \tt{0} &{\tiny $\pm$\tt{0}} & \tt{0} &{\tiny $\pm$\tt{0}} & \tt{2} &{\tiny $\pm$\tt{2}} & \tt{8} &{\tiny $\pm$\tt{10}} & \tt{\color{myblue}{100}} &{\tiny $\pm$\tt{0}} \\
 & \tt{task4} & \tt{0} &{\tiny $\pm$\tt{0}} & \tt{0} &{\tiny $\pm$\tt{0}} & \tt{0} &{\tiny $\pm$\tt{0}} & \tt{0} &{\tiny $\pm$\tt{0}} & \tt{0} &{\tiny $\pm$\tt{0}} & \tt{0} &{\tiny $\pm$\tt{0}} & \tt{0} &{\tiny $\pm$\tt{0}} & \tt{2} &{\tiny $\pm$\tt{2}} & \tt{50} &{\tiny $\pm$\tt{9}} & \tt{\color{myblue}{99}} &{\tiny $\pm$\tt{1}} \\
 & \tt{task5} & \tt{0} &{\tiny $\pm$\tt{0}} & \tt{0} &{\tiny $\pm$\tt{0}} & \tt{0} &{\tiny $\pm$\tt{0}} & \tt{0} &{\tiny $\pm$\tt{0}} & \tt{0} &{\tiny $\pm$\tt{0}} & \tt{0} &{\tiny $\pm$\tt{0}} & \tt{0} &{\tiny $\pm$\tt{0}} & \tt{2} &{\tiny $\pm$\tt{2}} & \tt{8} &{\tiny $\pm$\tt{11}} & \tt{\color{myblue}{88}} &{\tiny $\pm$\tt{8}} \\
 & \tt{overall} & \tt{0} &{\tiny $\pm$\tt{0}} & \tt{20} &{\tiny $\pm$\tt{0}} & \tt{1} &{\tiny $\pm$\tt{1}} & \tt{19} &{\tiny $\pm$\tt{1}} & \tt{0} &{\tiny $\pm$\tt{0}} & \tt{0} &{\tiny $\pm$\tt{0}} & \tt{0} &{\tiny $\pm$\tt{0}} & \tt{19} &{\tiny $\pm$\tt{1}} & \tt{47} &{\tiny $\pm$\tt{7}} & \tt{\color{myblue}{97}} &{\tiny $\pm$\tt{1}} \\
\cmidrule{1-22}
\multirow[c]{6}{*}{\tt{puzzle-4x6}} & \tt{task1} & \tt{0} &{\tiny $\pm$\tt{0}} & \tt{87} &{\tiny $\pm$\tt{9}} & \tt{0} &{\tiny $\pm$\tt{0}} & \tt{48} &{\tiny $\pm$\tt{39}} & \tt{0} &{\tiny $\pm$\tt{0}} & \tt{0} &{\tiny $\pm$\tt{0}} & \tt{0} &{\tiny $\pm$\tt{0}} & \tt{61} &{\tiny $\pm$\tt{16}} & \tt{\color{myblue}{98}} &{\tiny $\pm$\tt{4}} & \tt{\color{myblue}{100}} &{\tiny $\pm$\tt{0}} \\
 & \tt{task2} & \tt{0} &{\tiny $\pm$\tt{0}} & \tt{0} &{\tiny $\pm$\tt{0}} & \tt{0} &{\tiny $\pm$\tt{0}} & \tt{5} &{\tiny $\pm$\tt{10}} & \tt{0} &{\tiny $\pm$\tt{0}} & \tt{0} &{\tiny $\pm$\tt{0}} & \tt{0} &{\tiny $\pm$\tt{0}} & \tt{2} &{\tiny $\pm$\tt{2}} & \tt{46} &{\tiny $\pm$\tt{30}} & \tt{\color{myblue}{66}} &{\tiny $\pm$\tt{13}} \\
 & \tt{task3} & \tt{0} &{\tiny $\pm$\tt{0}} & \tt{0} &{\tiny $\pm$\tt{0}} & \tt{0} &{\tiny $\pm$\tt{0}} & \tt{0} &{\tiny $\pm$\tt{0}} & \tt{0} &{\tiny $\pm$\tt{0}} & \tt{0} &{\tiny $\pm$\tt{0}} & \tt{0} &{\tiny $\pm$\tt{0}} & \tt{0} &{\tiny $\pm$\tt{0}} & \tt{34} &{\tiny $\pm$\tt{10}} & \tt{\color{myblue}{67}} &{\tiny $\pm$\tt{21}} \\
 & \tt{task4} & \tt{0} &{\tiny $\pm$\tt{0}} & \tt{0} &{\tiny $\pm$\tt{0}} & \tt{0} &{\tiny $\pm$\tt{0}} & \tt{0} &{\tiny $\pm$\tt{0}} & \tt{0} &{\tiny $\pm$\tt{0}} & \tt{0} &{\tiny $\pm$\tt{0}} & \tt{0} &{\tiny $\pm$\tt{0}} & \tt{1} &{\tiny $\pm$\tt{1}} & \tt{5} &{\tiny $\pm$\tt{6}} & \tt{\color{myblue}{23}} &{\tiny $\pm$\tt{7}} \\
 & \tt{task5} & \tt{\color{myblue}{0}} &{\tiny $\pm$\tt{0}} & \tt{\color{myblue}{0}} &{\tiny $\pm$\tt{0}} & \tt{\color{myblue}{0}} &{\tiny $\pm$\tt{0}} & \tt{\color{myblue}{0}} &{\tiny $\pm$\tt{0}} & \tt{\color{myblue}{0}} &{\tiny $\pm$\tt{0}} & \tt{\color{myblue}{0}} &{\tiny $\pm$\tt{0}} & \tt{\color{myblue}{0}} &{\tiny $\pm$\tt{0}} & \tt{\color{myblue}{0}} &{\tiny $\pm$\tt{0}} & \tt{\color{myblue}{0}} &{\tiny $\pm$\tt{0}} & \tt{\color{myblue}{0}} &{\tiny $\pm$\tt{0}} \\
 & \tt{overall} & \tt{0} &{\tiny $\pm$\tt{0}} & \tt{17} &{\tiny $\pm$\tt{2}} & \tt{0} &{\tiny $\pm$\tt{0}} & \tt{11} &{\tiny $\pm$\tt{8}} & \tt{0} &{\tiny $\pm$\tt{0}} & \tt{0} &{\tiny $\pm$\tt{0}} & \tt{0} &{\tiny $\pm$\tt{0}} & \tt{13} &{\tiny $\pm$\tt{3}} & \tt{37} &{\tiny $\pm$\tt{4}} & \tt{\color{myblue}{51}} &{\tiny $\pm$\tt{5}} \\
\bottomrule
\end{tabularew}

}
\end{table}

\begin{figure}[t!]
    \centering
    \vspace{5pt}
    \includegraphics[width=1.0\textwidth]{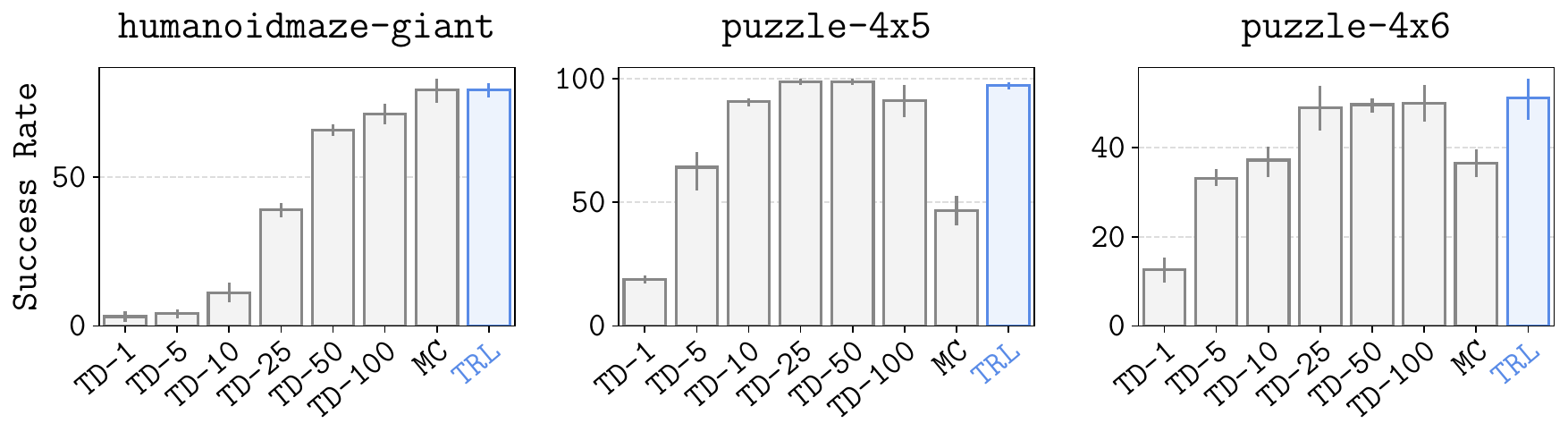}
    \vspace{-10pt}
    \caption{
    \footnotesize
    \textbf{TRL matches the best, individually tuned TD-$\bm{n}$ baseline, without needing to set $\bm{n}$.}
    }
    \label{fig:1b}
\end{figure}

\textbf{Results.}
We present the main comparison results in \Cref{table:1b}
and the comparison results with TD-$\{1, 5, 10, 25, 50, 100\}$ in \Cref{fig:1b}.
The results suggest that \methodname achieves the best performance across all three environments,
outperforming or matching both TD- and MC-based approaches as well as other previous GCRL methods.
Notably, \methodname is the only triangle inequality-based method
that achieves non-trivial performance on these challenging tasks.

In particular, we highlight that $1$-step TD learning struggles on long-horizon tasks.
While TD-$n$ or MC value learning can improve performance,
\Cref{fig:1b} indicates that it requires careful, task-dependent tuning of $n$.
In contrast, \methodname achieves the best performance \textbf{without needing to set $\bm{n}$},
matching the performance of the best TD-$n$ baseline individually tuned for each task.
This highlights the benefits of our divide-and-conquer value learning framework.

\subsection{How does \methodname compare to prior methods on standard benchmarks?}
\label{sec:exp_standard}

Next, we compare the performance of \methodname with existing offline GCRL algorithms on OGBench~\citep{ogbench_park2025},
a standard benchmark in offline goal-conditioned RL.
The main research question here is
whether \methodname is comparable to existing algorithms on \textbf{regular, not necessarily long-horizon} tasks.

\textbf{Tasks.}
We consider $10$ goal-reaching tasks from OGBench.
They span across robotic maze navigation (\tt{\{point, ant, humanoid\}maze}),
ball control (\tt{antsoccer}),
robotic object manipulation (\tt{cube, scene})
and puzzle solving (\tt{puzzle}).
We use the \tt{oraclerep} variants of these tasks, as in \citet{sharsa_park2025}.
For datasets, we use the standard \tt{navigate} and \tt{play} datasets
consisting of task-agnostic trajectories that randomly navigate the maze or perform random tasks. 

\textbf{Methods.}
We consider five widely used offline GCRL methods in this section:
BC,
FBC,
IVL~\citep{iql_kostrikov2022, hiql_park2023},
IQL~\citep{iql_kostrikov2022},
CRL~\citep{crl_eysenbach2022},
and QRL~\citep{qrl_wang2023}.
As in the previous section,
we additionally consider two algorithms that employ the soft triangle inequality,
TDP~\citep{gcrl_kaelbling1993, fwrl_dhiman2018, sgt_jurgenson2020} and COE~\citep{coe_piekos2023}.

\begin{figure}[h!]
    \centering
    \includegraphics[width=1.0\textwidth]{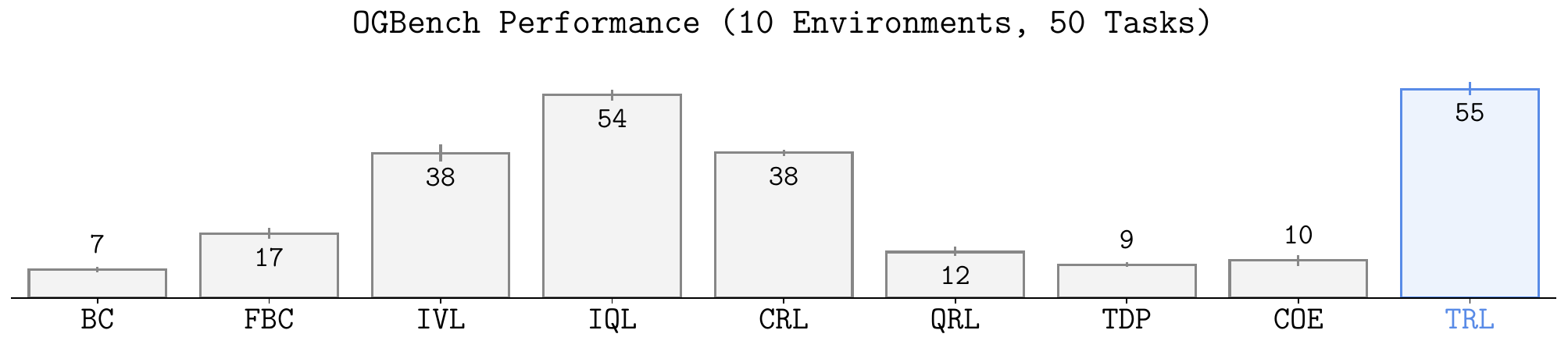}
    \caption{
    \footnotesize
    \textbf{TRL achieves strong performance on standard OGBench tasks.}
    While TRL is not specifically designed for short-horizon tasks,
    it outperforms or matches previous GCRL methods on a standard benchmark.
    }
    \label{fig:standard}
\end{figure}

\textbf{Results.}
\Cref{fig:standard} shows the aggregated performance of each method
over $10$ OGBench environments and $50$ evaluation tasks
(see \Cref{table:standard} for the full table).
The results suggest that \methodname achieves the best performance on average.
While the gap between \methodname and the second-best method (IQL) is narrower than that in \Cref{sec:exp_long},
this is as expected to some extent, since \methodname has its strongest advantages in long-horizon tasks.
We also note that \methodname is the only triangle inequality-based method that achieves strong performance
on these robotic tasks.

\subsection{Ablation Studies}
\label{sec:exp_abl}

We present ablation studies on three components of \methodname
(expectile $\kappa$, subgoal distributions, and distance-based re-weighting factor $\lambda$)
in this section.

\begin{wrapfigure}{r}{0.49\textwidth}
    \centering
    \raisebox{0pt}[\dimexpr\height-1.0\baselineskip\relax]{
        \begin{subfigure}[t]{1.0\linewidth}
        \includegraphics[width=\linewidth]{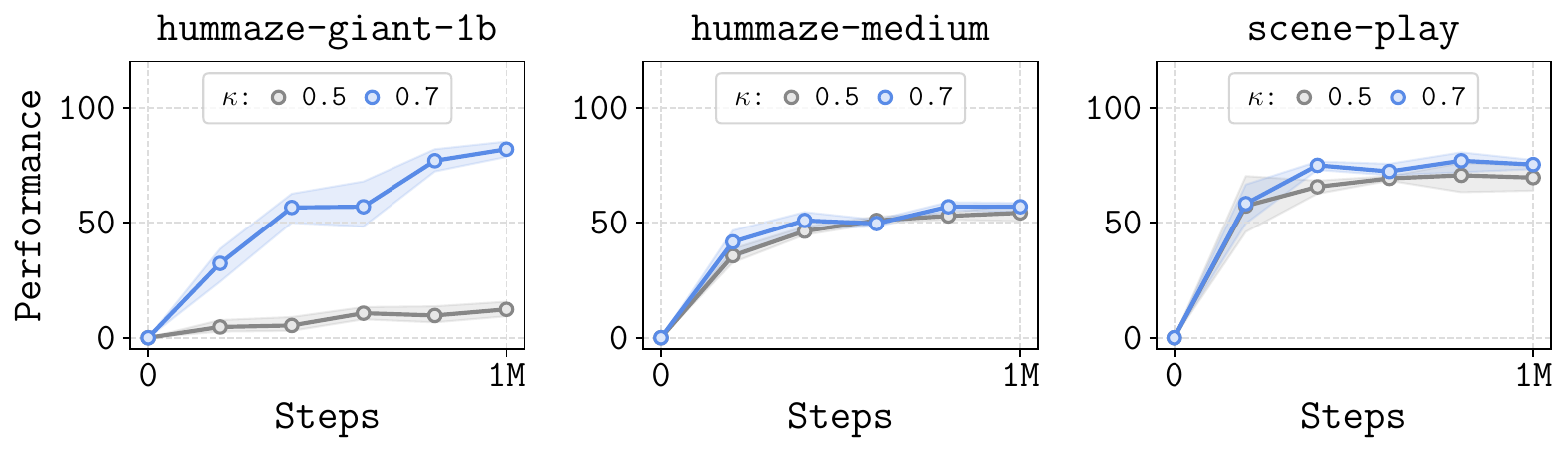}
        \end{subfigure}
    }
    \vspace{-15pt}
    \caption{
    \footnotesize
    \textbf{Ablation study on the expectile $\bm{\kappa}$.}
    }
    \vspace{-5pt}
    \label{fig:abl_expectile}
\end{wrapfigure}
\textbf{Expectile $\bm{\kappa}$.}
\methodname uses expectile regression to approximate the $\max$ operator in \Cref{eq:trl_v}.
We evaluate \methodname with $\kappa \in \{0.5, 0.7\}$, and present the results in \Cref{fig:abl_expectile}.
The results show that
while $\kappa = 0.5$ (\ie, behavioral regression) works well in \tt{humanoidmaze-medium} and \tt{scene-play},
$\kappa > 0.5$ is crucial for achieving strong performance on \tt{humanoidmaze-giant}.
The strong performance with $\kappa = 0.5$ on some tasks
is akin to how one-step RL (\ie, behavioral value learning)
is often enough to achieve solid performance in offline RL~\citep{onestep_brandfonbrener2021, sharsa_park2025}.

\begin{wrapfigure}{r}{0.49\textwidth}
    \centering
    \raisebox{0pt}[\dimexpr\height-1.0\baselineskip\relax]{
        \begin{subfigure}[t]{1.0\linewidth}
        \includegraphics[width=\linewidth]{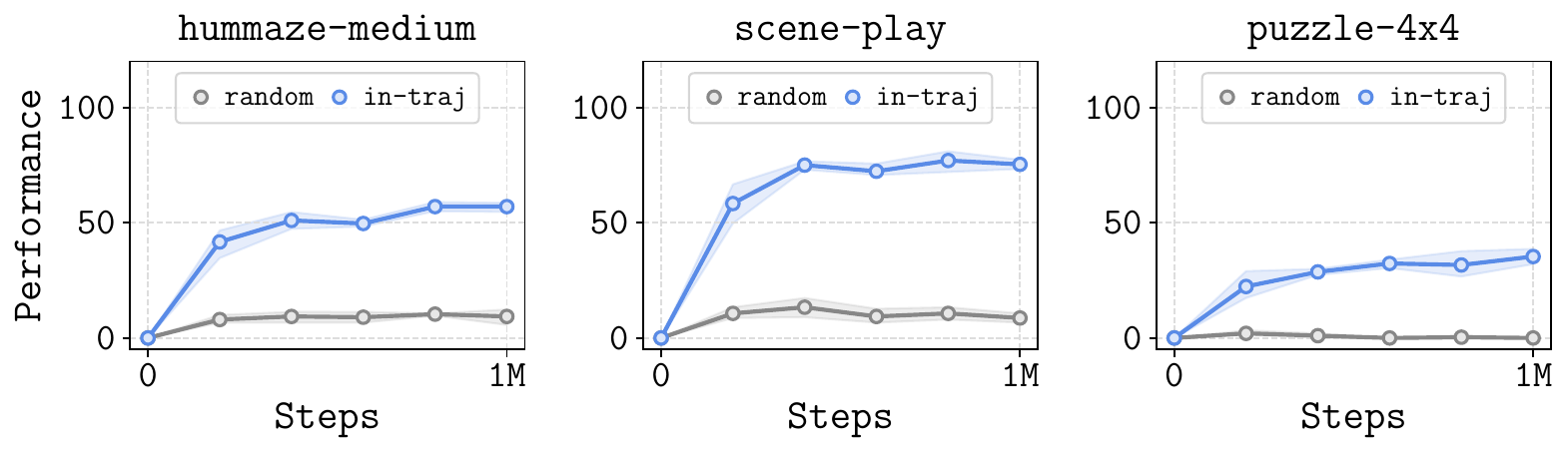}
        \end{subfigure}
    }
    \vspace{-15pt}
    \caption{
    \footnotesize
    \textbf{Ablation study on subgoal distributions.}
    }
    \vspace{-5pt}
    \label{fig:abl_subgoal_type}
\end{wrapfigure}
\textbf{Subgoal distributions.}
Another key feature of \methodname is the use of \emph{behavioral} (in-trajectory) subgoals.
In \Cref{fig:abl_subgoal_type}, we ablate this choice
by comparing behavioral subgoals (``in-traj'') with random subgoals (``random'').
The results suggest that random subgoals substantially degrade performance,
highlighting the importance of using behavioral subgoals.

\clearpage

\begin{wrapfigure}{r}{0.49\textwidth}
    \centering
    \vspace{12pt}
    \raisebox{0pt}[\dimexpr\height-1.0\baselineskip\relax]{
        \begin{subfigure}[t]{1.0\linewidth}
        \includegraphics[width=\linewidth]{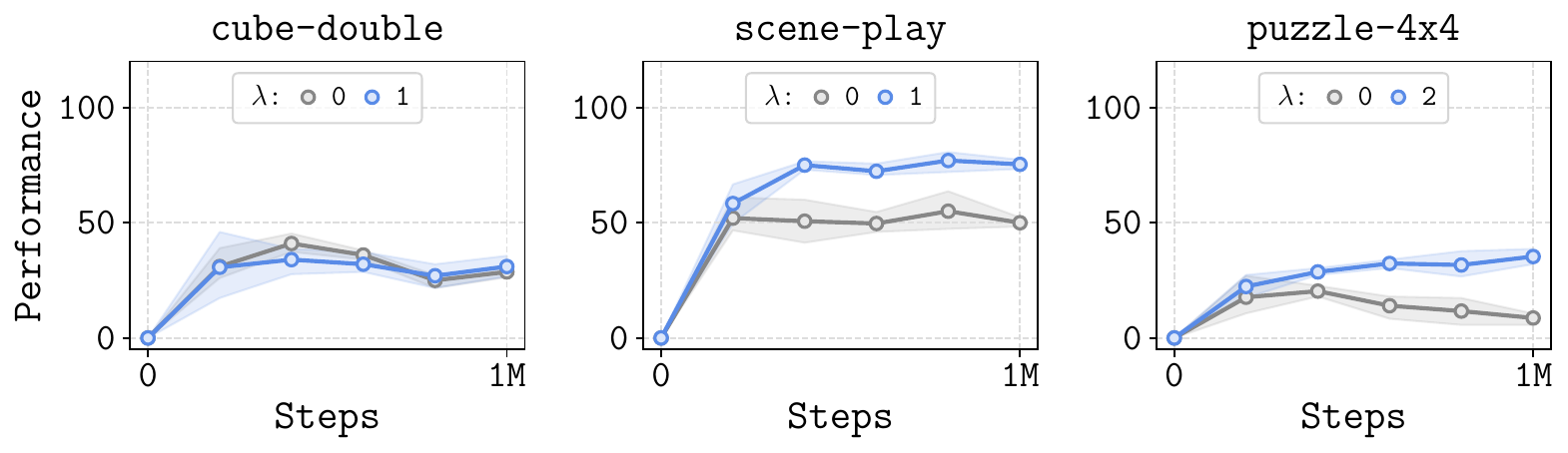}
        \end{subfigure}
    }
    \vspace{-15pt}
    \caption{
    \footnotesize
    \textbf{Ablation study on the weighting factor $\bm{\lambda}$.}
    }
    \vspace{-5pt}
    \label{fig:abl_lam}
\end{wrapfigure}
\textbf{Distance-based re-weighting factor $\bm{\lambda}$.}
\Cref{fig:abl_lam} shows an ablation study on \methodname's distance-based re-weighting factor $\lambda$.
Note that a positive $\lambda$ makes \methodname focus more on shorter trajectory chunks.
The results show that enabling re-weighting (\ie, $\lambda > 0$) often leads to better performance.

\cutsectionup
\section{What's Next?}
\cutsectiondown

In this work,
we have shown that Transitive RL
exhibits better horizon scalability than standard TD- or MC-based value learning approaches
in offline goal-conditioned RL.
At a high level, this provides a piece of affirmative evidence for the hypothesis we posed in \Cref{sec:intro}:
namely, a \emph{divide-and-conquer} value learning algorithm might lead to
an ideal value learning algorithm that is free from the curse of horizon.

This is only the beginning of the journey.
For example, it remains an open question whether a similar divide-and-conquer value learning technique
could be applied to learn an \emph{unbiased} value function in stochastic environments
(which is a limitation of \methodname, as well as many other works leveraging the vanilla triangle inequality in GCRL).
The successor temporal distance framework proposed by \citet{cmd_myers2024} may provide clues to this question.
Another open question is whether \methodname (or any divide-and-conquer-style algorithm) can be extended to general reward-based RL tasks,
beyond goal-conditioned RL. 
We hope the ideas and techniques introduced in this work help address these important open questions
and facilitate future progress toward scalable value learning algorithms.

\section*{Acknowledgment}
This work was partly supported by the Korea Foundation for Advanced Studies (KFAS), the NSF Graduate Research Fellowship, AFOSR FA9550-22-1-0273, and ONR N00014-22-1-2773.
This research used the Savio computational cluster resource provided by the Berkeley Research Computing program at UC Berkeley.

\bibliography{iclr2026_conference}
\bibliographystyle{iclr2026_conference}

\clearpage

\appendix
\section{Proofs}
\label{sec:proofs}

In this section, we show that
\methodname's random sampling variant of divide-and-conquer value learning still gives us a logarithmic dependency on the horizon
in the ideal, tabular case.
Let $B(n)$ be the expected number of \emph{maximum} Bellman recursions to compute the value $V(s_i, s_j)$
for a length-$n$ trajectory chunk, $(s_i, s_{i+1}, \ldots, s_{j = i+n})$
with the random sampling variant of transitive Bellman updates.
Note that $B$ satisfies the following recursive equation:
\begin{align*}
    B(1) &= 0, \\
    B(n) &= 1 + \frac{1}{n-1} \sum_{k=1}^{n-1} \max(B(k), B(n-k)),
\end{align*}
for $n > 1$.

Then, we have the following results:

\begin{lemma}
\label{lem:seq}
Let $n \geq 2$ be an integer and $C(n) = \frac{1}{n-1} \sum_{k=1}^{n-1} \max(k, n-k)$. Then $C(n) \leq 3n/4$.
\end{lemma}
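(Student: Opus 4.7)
The plan is to reduce the sum to something symmetric and then handle the parities of $n$ separately. The key observation is the identity $\max(a, b) = \tfrac{1}{2}(a + b + |a - b|)$, which when applied to $a = k$, $b = n - k$ gives $\max(k, n-k) = \tfrac{1}{2}(n + |2k - n|)$. This lets me rewrite
\begin{align*}
\sum_{k=1}^{n-1} \max(k, n-k) = \frac{n(n-1)}{2} + \frac{1}{2}\sum_{k=1}^{n-1}|2k - n|,
\end{align*}
so everything reduces to evaluating $S(n) := \sum_{k=1}^{n-1} |2k-n|$.

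Next I would compute $S(n)$ by casework on the parity of $n$. The values $2k - n$ for $k = 1, \ldots, n-1$ form an arithmetic progression symmetric about $0$ with common difference $2$. If $n = 2m$ is even, the absolute values are $2, 4, \ldots, 2m-2$ each taken twice (plus a single $0$ at the midpoint), so $S(2m) = 2m(m-1)$. If $n = 2m+1$ is odd, the absolute values are $1, 3, \ldots, 2m-1$ each taken twice, giving $S(2m+1) = 2m^2$. Substituting back yields closed forms $C(2m) = (3m^2 - 2m)/(2m-1)$ and $C(2m+1) = (3m+1)/2$.

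Finally, in each case I would check the bound $C(n) \leq 3n/4$ by clearing denominators, which reduces to a trivial inequality: for $n = 2m$, to $-m \leq 0$, and for $n = 2m+1$, to $6m + 2 \leq 6m + 3$. Both hold for all $m \geq 1$, covering $n \geq 2$ as required.

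There is no real obstacle; the only mildly nontrivial step is recognizing the $\max$-identity that converts the piecewise expression into a single closed-form sum, after which the rest is bookkeeping. It is worth noting in passing that the bound $3n/4$ is essentially tight (the odd case gives $C(n) = (3n-1)/4$), which is reassuring and also what is needed to feed back into the recursion for $B(n)$ to obtain the $O(\log n)$ bound advertised in the digression.
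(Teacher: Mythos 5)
Your proof is correct and takes essentially the same route as the paper: both arguments split on the parity of $n$, arrive at the identical closed forms $C(2m) = (3m^2-2m)/(2m-1)$ and $C(2m+1) = (3m+1)/2$, and verify the bound by clearing denominators. The only difference is cosmetic---you evaluate the sum via the identity $\max(a,b) = \tfrac12(a+b+|a-b|)$, whereas the paper sums the larger argument directly over the two halves of the index range---and your observation that the odd case gives $C(n) = (3n-1)/4$, so the constant $3/4$ is essentially tight, is a nice (correct) addition.
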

\begin{proof}
If $n = 2m$ (even),
\begin{align*}
    C(n) &= \frac{1}{n-1}\sum_{k=1}^{n-1} \max(k, n-k) \\
    &= \frac{1}{2m-1}(m + 2((m+1) + \cdots + (2m-1))) \\
    &= \frac{1}{2m-1} (m + 3m(m-1)) \\
    &= \frac{3m^2-2m}{2m-1} \\
    &\leq \frac{3}{2}m \\
    &= \frac{3}{4} n.
\end{align*}

If $n = 2m + 1$ (odd),
\begin{align*}
    C(n) &= \frac{1}{n-1}\sum_{k=1}^{n-1} \max(k, n-k) \\
    &= \frac{1}{2m}(2((m+1) + \cdots + 2m)) \\
    &= \frac{1}{2m}(m(3m+1)) \\
    &= \frac{3m+1}{2} \\
    &\leq \frac{6m+3}{4} \\
    &= \frac{3}{4}n.
\end{align*}
\end{proof}

\begin{theorem}
\label{thm:seq}
$B(n) \leq \log n / \log (4/3)$.
\end{theorem}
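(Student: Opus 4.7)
The plan is to prove $B(n) \leq \log n / \log(4/3)$ by strong induction on $n$, combining the monotonicity of $\log$ with its concavity (Jensen's inequality) to collapse the recursive sum into an expression involving $C(n)$, and then apply \Cref{lem:seq}.

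\textbf{Base case.} For $n=1$ we have $B(1)=0 = \log 1/\log(4/3)$, so the inequality holds with equality. (One can also verify $B(2)=1$ directly against $\log 2/\log(4/3)\approx 2.41$ as a sanity check, although it falls out of the induction step below.)

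\textbf{Inductive step.} Assume $B(m) \leq \log m / \log(4/3)$ for all $1 \leq m < n$. Starting from the definition of $B(n)$, I would chain the following three observations. First, since $\log$ is monotone nondecreasing and the inductive hypothesis applies to both $k$ and $n-k$ (each strictly less than $n$ for $1 \leq k \leq n-1$),
\begin{equation*}
\max(B(k),B(n-k)) \leq \max\!\left(\tfrac{\log k}{\log(4/3)},\tfrac{\log(n-k)}{\log(4/3)}\right) = \tfrac{\log \max(k,n-k)}{\log(4/3)}.
\end{equation*}
Notably, this step avoids having to separately prove monotonicity of $B$. Second, since $\log$ is concave, Jensen's inequality gives
\begin{equation*}
\frac{1}{n-1}\sum_{k=1}^{n-1} \log \max(k,n-k) \leq \log\!\left(\frac{1}{n-1}\sum_{k=1}^{n-1} \max(k,n-k)\right) = \log C(n).
\end{equation*}
Third, \Cref{lem:seq} gives $C(n) \leq 3n/4$, and monotonicity of $\log$ yields $\log C(n) \leq \log(3n/4) = \log n - \log(4/3)$.

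\textbf{Putting it together.} Combining the three bounds,
\begin{equation*}
B(n) \leq 1 + \frac{\log n - \log(4/3)}{\log(4/3)} = \frac{\log n}{\log(4/3)},
\end{equation*}
which closes the induction.

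\textbf{Expected obstacle.} The proof is essentially a one-line telescoping once the right inequalities are identified; the only subtle point is recognizing that applying Jensen to $\log\max(k,n-k)$ (treating the discrete uniform over $k$ as the probability measure) is exactly what converts the recursion into a self-similar form where the $-\log(4/3)$ term from \Cref{lem:seq} cancels the leading $+1$ in the recurrence. If one instead tried to bound each term individually by $\log(n-1)/\log(4/3)$, the resulting bound would be $1 + \log(n-1)/\log(4/3)$, which is too weak; Jensen is essential to recover the constant $3/4$ inside the logarithm.
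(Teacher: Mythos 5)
Your proposal is correct and follows essentially the same route as the paper's proof: strong induction, the inductive hypothesis combined with monotonicity of $\log$ to bound $\max(B(k),B(n-k))$, Jensen's inequality applied to the concave $\log$ to pull the average inside, and then \Cref{lem:seq} to get the $3n/4$ bound whose $-\log(4/3)$ cancels the leading $1$. No gaps; the argument matches the paper step for step.
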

\begin{proof}
Recall that we have the following recursive equation:
\begin{align*}
    B(1) &= 0, \\
    B(n) &= 1 + \frac{1}{n-1} \sum_{k=1}^{n-1} \max(B(k), B(n-k)),
\end{align*}
for $n > 1$.

First, note that $B(1) = 0 \leq \log 1/\log (4/3)$.
For $n \geq 2$, we proceed by induction, assuming that $B(m) \leq \log m / \log (4/3)$ for all $m < n$.
Then,
\begin{align*}
    B(n) &= 1 + \frac{1}{n-1} \sum_{k=1}^{n-1} \max(B(k), B(n-k)) \\
    &\leq 1 + \frac{1}{n-1} \sum_{k=1}^{n-1} \frac{\max(\log k, \log (n-k))}{\log (4/3)} \\
    &= 1 + \frac{1}{\log (4/3)} \frac{1}{n-1} \sum_{k=1}^{n-1} \log( \max(k, n-k)) \\
    &\leq 1 + \frac{1}{\log (4/3)} \log \left(\frac{1}{n-1} \sum_{k=1}^{n-1} \max(k,n-k)\right) \\
    &= 1 + \frac{1}{\log (4/3)} \log C(n) \\
    &\leq 1 + \frac{1}{\log (4/3)} \log \left(\frac34 n\right) \\
    &= \frac{\log n}{\log (4/3)},
\end{align*}
where we use the inductive hypothesis in the first inequality,
Jensen's inequality in the second,
and \Cref{lem:seq} in the third.
\end{proof}

\Cref{thm:seq} shows that the number of expected maximum Bellman recursions in the random sampling variant of transitive Bellman updates
also has a logarithmic dependency on the horizon length.

\clearpage

\section{Additional Results}

We provide the full comparison results on the standard OGBench tasks (\Cref{sec:exp_standard}) in \Cref{table:standard}.

\begin{table}[h!]
\caption{
\footnotesize
\textbf{Full results on standard OGBench tasks.}
}
\label{table:standard}
\centering
\scalebox{0.64}
{

\begin{tabularew}{ll*{9}{>{\spew{.5}{+1}}r@{\,}l}}
\toprule
\multicolumn{1}{l}{\tt{Environment}} & \multicolumn{1}{l}{\tt{Task}} & \multicolumn{2}{c}{\tt{BC}} & \multicolumn{2}{c}{\tt{FBC}} & \multicolumn{2}{c}{\tt{IVL}} & \multicolumn{2}{c}{\tt{IQL}} & \multicolumn{2}{c}{\tt{CRL}} & \multicolumn{2}{c}{\tt{QRL}} & \multicolumn{2}{c}{\tt{TDP}} & \multicolumn{2}{c}{\tt{COE}} & \multicolumn{2}{c}{\tt{\color{myblue}TRL}} \\
\midrule
\multirow[c]{6}{*}{\tt{pointmaze-large-navigate-oraclerep-v0}} & \tt{task1} & \tt{53} &{\tiny $\pm$\tt{24}} & \tt{98} &{\tiny $\pm$\tt{3}} & \tt{94} &{\tiny $\pm$\tt{6}} & \tt{92} &{\tiny $\pm$\tt{9}} & \tt{23} &{\tiny $\pm$\tt{18}} & \tt{31} &{\tiny $\pm$\tt{28}} & \tt{40} &{\tiny $\pm$\tt{19}} & \tt{67} &{\tiny $\pm$\tt{26}} & \tt{52} &{\tiny $\pm$\tt{19}} \\
 & \tt{task2} & \tt{3} &{\tiny $\pm$\tt{4}} & \tt{17} &{\tiny $\pm$\tt{6}} & \tt{0} &{\tiny $\pm$\tt{0}} & \tt{0} &{\tiny $\pm$\tt{0}} & \tt{42} &{\tiny $\pm$\tt{28}} & \tt{5} &{\tiny $\pm$\tt{6}} & \tt{0} &{\tiny $\pm$\tt{0}} & \tt{0} &{\tiny $\pm$\tt{0}} & \tt{1} &{\tiny $\pm$\tt{2}} \\
 & \tt{task3} & \tt{17} &{\tiny $\pm$\tt{9}} & \tt{76} &{\tiny $\pm$\tt{11}} & \tt{100} &{\tiny $\pm$\tt{0}} & \tt{76} &{\tiny $\pm$\tt{13}} & \tt{67} &{\tiny $\pm$\tt{14}} & \tt{0} &{\tiny $\pm$\tt{0}} & \tt{4} &{\tiny $\pm$\tt{4}} & \tt{12} &{\tiny $\pm$\tt{8}} & \tt{7} &{\tiny $\pm$\tt{5}} \\
 & \tt{task4} & \tt{23} &{\tiny $\pm$\tt{10}} & \tt{79} &{\tiny $\pm$\tt{8}} & \tt{3} &{\tiny $\pm$\tt{5}} & \tt{0} &{\tiny $\pm$\tt{0}} & \tt{13} &{\tiny $\pm$\tt{18}} & \tt{1} &{\tiny $\pm$\tt{2}} & \tt{56} &{\tiny $\pm$\tt{12}} & \tt{44} &{\tiny $\pm$\tt{27}} & \tt{41} &{\tiny $\pm$\tt{13}} \\
 & \tt{task5} & \tt{30} &{\tiny $\pm$\tt{22}} & \tt{87} &{\tiny $\pm$\tt{5}} & \tt{44} &{\tiny $\pm$\tt{39}} & \tt{0} &{\tiny $\pm$\tt{0}} & \tt{21} &{\tiny $\pm$\tt{8}} & \tt{0} &{\tiny $\pm$\tt{0}} & \tt{49} &{\tiny $\pm$\tt{16}} & \tt{48} &{\tiny $\pm$\tt{16}} & \tt{64} &{\tiny $\pm$\tt{14}} \\
 & \tt{overall} & \tt{25} &{\tiny $\pm$\tt{3}} & \tt{71} &{\tiny $\pm$\tt{4}} & \tt{48} &{\tiny $\pm$\tt{10}} & \tt{34} &{\tiny $\pm$\tt{3}} & \tt{33} &{\tiny $\pm$\tt{7}} & \tt{7} &{\tiny $\pm$\tt{7}} & \tt{30} &{\tiny $\pm$\tt{5}} & \tt{34} &{\tiny $\pm$\tt{7}} & \tt{33} &{\tiny $\pm$\tt{5}} \\
\cmidrule{1-20}
\multirow[c]{6}{*}{\tt{antmaze-large-navigate-oraclerep-v0}} & \tt{task1} & \tt{3} &{\tiny $\pm$\tt{3}} & \tt{13} &{\tiny $\pm$\tt{11}} & \tt{11} &{\tiny $\pm$\tt{13}} & \tt{23} &{\tiny $\pm$\tt{13}} & \tt{87} &{\tiny $\pm$\tt{9}} & \tt{51} &{\tiny $\pm$\tt{15}} & \tt{6} &{\tiny $\pm$\tt{5}} & \tt{6} &{\tiny $\pm$\tt{6}} & \tt{57} &{\tiny $\pm$\tt{13}} \\
 & \tt{task2} & \tt{22} &{\tiny $\pm$\tt{12}} & \tt{24} &{\tiny $\pm$\tt{8}} & \tt{19} &{\tiny $\pm$\tt{5}} & \tt{44} &{\tiny $\pm$\tt{20}} & \tt{64} &{\tiny $\pm$\tt{25}} & \tt{69} &{\tiny $\pm$\tt{12}} & \tt{23} &{\tiny $\pm$\tt{6}} & \tt{26} &{\tiny $\pm$\tt{11}} & \tt{66} &{\tiny $\pm$\tt{4}} \\
 & \tt{task3} & \tt{53} &{\tiny $\pm$\tt{10}} & \tt{48} &{\tiny $\pm$\tt{6}} & \tt{55} &{\tiny $\pm$\tt{6}} & \tt{82} &{\tiny $\pm$\tt{6}} & \tt{91} &{\tiny $\pm$\tt{3}} & \tt{96} &{\tiny $\pm$\tt{3}} & \tt{69} &{\tiny $\pm$\tt{9}} & \tt{67} &{\tiny $\pm$\tt{4}} & \tt{80} &{\tiny $\pm$\tt{4}} \\
 & \tt{task4} & \tt{17} &{\tiny $\pm$\tt{9}} & \tt{7} &{\tiny $\pm$\tt{3}} & \tt{4} &{\tiny $\pm$\tt{3}} & \tt{15} &{\tiny $\pm$\tt{7}} & \tt{92} &{\tiny $\pm$\tt{4}} & \tt{57} &{\tiny $\pm$\tt{16}} & \tt{17} &{\tiny $\pm$\tt{10}} & \tt{14} &{\tiny $\pm$\tt{1}} & \tt{8} &{\tiny $\pm$\tt{8}} \\
 & \tt{task5} & \tt{17} &{\tiny $\pm$\tt{5}} & \tt{16} &{\tiny $\pm$\tt{4}} & \tt{18} &{\tiny $\pm$\tt{12}} & \tt{20} &{\tiny $\pm$\tt{7}} & \tt{90} &{\tiny $\pm$\tt{6}} & \tt{61} &{\tiny $\pm$\tt{12}} & \tt{17} &{\tiny $\pm$\tt{4}} & \tt{18} &{\tiny $\pm$\tt{5}} & \tt{18} &{\tiny $\pm$\tt{11}} \\
 & \tt{overall} & \tt{22} &{\tiny $\pm$\tt{4}} & \tt{22} &{\tiny $\pm$\tt{4}} & \tt{21} &{\tiny $\pm$\tt{6}} & \tt{37} &{\tiny $\pm$\tt{8}} & \tt{85} &{\tiny $\pm$\tt{7}} & \tt{67} &{\tiny $\pm$\tt{8}} & \tt{27} &{\tiny $\pm$\tt{3}} & \tt{26} &{\tiny $\pm$\tt{3}} & \tt{46} &{\tiny $\pm$\tt{5}} \\
\cmidrule{1-20}
\multirow[c]{6}{*}{\tt{humanoidmaze-medium-navigate-oraclerep-v0}} & \tt{task1} & \tt{10} &{\tiny $\pm$\tt{7}} & \tt{7} &{\tiny $\pm$\tt{9}} & \tt{29} &{\tiny $\pm$\tt{4}} & \tt{31} &{\tiny $\pm$\tt{9}} & \tt{91} &{\tiny $\pm$\tt{8}} & \tt{8} &{\tiny $\pm$\tt{12}} & \tt{4} &{\tiny $\pm$\tt{3}} & \tt{6} &{\tiny $\pm$\tt{4}} & \tt{76} &{\tiny $\pm$\tt{6}} \\
 & \tt{task2} & \tt{6} &{\tiny $\pm$\tt{4}} & \tt{8} &{\tiny $\pm$\tt{5}} & \tt{37} &{\tiny $\pm$\tt{12}} & \tt{82} &{\tiny $\pm$\tt{9}} & \tt{96} &{\tiny $\pm$\tt{1}} & \tt{22} &{\tiny $\pm$\tt{29}} & \tt{7} &{\tiny $\pm$\tt{3}} & \tt{9} &{\tiny $\pm$\tt{2}} & \tt{96} &{\tiny $\pm$\tt{2}} \\
 & \tt{task3} & \tt{8} &{\tiny $\pm$\tt{5}} & \tt{15} &{\tiny $\pm$\tt{5}} & \tt{16} &{\tiny $\pm$\tt{7}} & \tt{6} &{\tiny $\pm$\tt{6}} & \tt{72} &{\tiny $\pm$\tt{17}} & \tt{30} &{\tiny $\pm$\tt{21}} & \tt{12} &{\tiny $\pm$\tt{3}} & \tt{13} &{\tiny $\pm$\tt{3}} & \tt{8} &{\tiny $\pm$\tt{8}} \\
 & \tt{task4} & \tt{2} &{\tiny $\pm$\tt{1}} & \tt{3} &{\tiny $\pm$\tt{3}} & \tt{0} &{\tiny $\pm$\tt{0}} & \tt{0} &{\tiny $\pm$\tt{0}} & \tt{8} &{\tiny $\pm$\tt{7}} & \tt{9} &{\tiny $\pm$\tt{9}} & \tt{0} &{\tiny $\pm$\tt{0}} & \tt{1} &{\tiny $\pm$\tt{1}} & \tt{11} &{\tiny $\pm$\tt{3}} \\
 & \tt{task5} & \tt{11} &{\tiny $\pm$\tt{5}} & \tt{12} &{\tiny $\pm$\tt{2}} & \tt{39} &{\tiny $\pm$\tt{16}} & \tt{63} &{\tiny $\pm$\tt{12}} & \tt{93} &{\tiny $\pm$\tt{4}} & \tt{20} &{\tiny $\pm$\tt{19}} & \tt{14} &{\tiny $\pm$\tt{4}} & \tt{14} &{\tiny $\pm$\tt{7}} & \tt{92} &{\tiny $\pm$\tt{3}} \\
 & \tt{overall} & \tt{7} &{\tiny $\pm$\tt{2}} & \tt{9} &{\tiny $\pm$\tt{3}} & \tt{24} &{\tiny $\pm$\tt{4}} & \tt{36} &{\tiny $\pm$\tt{3}} & \tt{72} &{\tiny $\pm$\tt{5}} & \tt{18} &{\tiny $\pm$\tt{17}} & \tt{7} &{\tiny $\pm$\tt{0}} & \tt{9} &{\tiny $\pm$\tt{2}} & \tt{57} &{\tiny $\pm$\tt{1}} \\
\cmidrule{1-20}
\multirow[c]{6}{*}{\tt{humanoidmaze-large-navigate-oraclerep-v0}} & \tt{task1} & \tt{1} &{\tiny $\pm$\tt{1}} & \tt{0} &{\tiny $\pm$\tt{0}} & \tt{9} &{\tiny $\pm$\tt{3}} & \tt{10} &{\tiny $\pm$\tt{1}} & \tt{49} &{\tiny $\pm$\tt{17}} & \tt{3} &{\tiny $\pm$\tt{3}} & \tt{0} &{\tiny $\pm$\tt{0}} & \tt{0} &{\tiny $\pm$\tt{0}} & \tt{9} &{\tiny $\pm$\tt{6}} \\
 & \tt{task2} & \tt{0} &{\tiny $\pm$\tt{0}} & \tt{0} &{\tiny $\pm$\tt{0}} & \tt{0} &{\tiny $\pm$\tt{0}} & \tt{0} &{\tiny $\pm$\tt{0}} & \tt{1} &{\tiny $\pm$\tt{1}} & \tt{0} &{\tiny $\pm$\tt{0}} & \tt{0} &{\tiny $\pm$\tt{0}} & \tt{0} &{\tiny $\pm$\tt{0}} & \tt{0} &{\tiny $\pm$\tt{0}} \\
 & \tt{task3} & \tt{3} &{\tiny $\pm$\tt{1}} & \tt{2} &{\tiny $\pm$\tt{1}} & \tt{3} &{\tiny $\pm$\tt{2}} & \tt{11} &{\tiny $\pm$\tt{6}} & \tt{63} &{\tiny $\pm$\tt{25}} & \tt{9} &{\tiny $\pm$\tt{6}} & \tt{3} &{\tiny $\pm$\tt{4}} & \tt{4} &{\tiny $\pm$\tt{3}} & \tt{29} &{\tiny $\pm$\tt{9}} \\
 & \tt{task4} & \tt{3} &{\tiny $\pm$\tt{3}} & \tt{1} &{\tiny $\pm$\tt{1}} & \tt{1} &{\tiny $\pm$\tt{1}} & \tt{2} &{\tiny $\pm$\tt{1}} & \tt{9} &{\tiny $\pm$\tt{8}} & \tt{0} &{\tiny $\pm$\tt{0}} & \tt{2} &{\tiny $\pm$\tt{2}} & \tt{4} &{\tiny $\pm$\tt{2}} & \tt{4} &{\tiny $\pm$\tt{4}} \\
 & \tt{task5} & \tt{2} &{\tiny $\pm$\tt{2}} & \tt{2} &{\tiny $\pm$\tt{1}} & \tt{2} &{\tiny $\pm$\tt{1}} & \tt{0} &{\tiny $\pm$\tt{0}} & \tt{20} &{\tiny $\pm$\tt{17}} & \tt{1} &{\tiny $\pm$\tt{1}} & \tt{0} &{\tiny $\pm$\tt{0}} & \tt{2} &{\tiny $\pm$\tt{2}} & \tt{0} &{\tiny $\pm$\tt{0}} \\
 & \tt{overall} & \tt{2} &{\tiny $\pm$\tt{1}} & \tt{1} &{\tiny $\pm$\tt{1}} & \tt{3} &{\tiny $\pm$\tt{1}} & \tt{5} &{\tiny $\pm$\tt{1}} & \tt{28} &{\tiny $\pm$\tt{5}} & \tt{3} &{\tiny $\pm$\tt{2}} & \tt{1} &{\tiny $\pm$\tt{1}} & \tt{2} &{\tiny $\pm$\tt{0}} & \tt{8} &{\tiny $\pm$\tt{1}} \\
\cmidrule{1-20}
\multirow[c]{6}{*}{\tt{antsoccer-arena-navigate-oraclerep-v0}} & \tt{task1} & \tt{7} &{\tiny $\pm$\tt{4}} & \tt{30} &{\tiny $\pm$\tt{4}} & \tt{76} &{\tiny $\pm$\tt{5}} & \tt{86} &{\tiny $\pm$\tt{6}} & \tt{52} &{\tiny $\pm$\tt{11}} & \tt{20} &{\tiny $\pm$\tt{5}} & \tt{12} &{\tiny $\pm$\tt{1}} & \tt{6} &{\tiny $\pm$\tt{5}} & \tt{89} &{\tiny $\pm$\tt{3}} \\
 & \tt{task2} & \tt{6} &{\tiny $\pm$\tt{3}} & \tt{24} &{\tiny $\pm$\tt{7}} & \tt{62} &{\tiny $\pm$\tt{5}} & \tt{92} &{\tiny $\pm$\tt{4}} & \tt{36} &{\tiny $\pm$\tt{4}} & \tt{21} &{\tiny $\pm$\tt{3}} & \tt{7} &{\tiny $\pm$\tt{6}} & \tt{11} &{\tiny $\pm$\tt{2}} & \tt{85} &{\tiny $\pm$\tt{5}} \\
 & \tt{task3} & \tt{2} &{\tiny $\pm$\tt{3}} & \tt{14} &{\tiny $\pm$\tt{6}} & \tt{82} &{\tiny $\pm$\tt{8}} & \tt{87} &{\tiny $\pm$\tt{5}} & \tt{49} &{\tiny $\pm$\tt{10}} & \tt{13} &{\tiny $\pm$\tt{6}} & \tt{1} &{\tiny $\pm$\tt{2}} & \tt{4} &{\tiny $\pm$\tt{3}} & \tt{89} &{\tiny $\pm$\tt{5}} \\
 & \tt{task4} & \tt{2} &{\tiny $\pm$\tt{1}} & \tt{11} &{\tiny $\pm$\tt{7}} & \tt{34} &{\tiny $\pm$\tt{6}} & \tt{59} &{\tiny $\pm$\tt{2}} & \tt{13} &{\tiny $\pm$\tt{7}} & \tt{3} &{\tiny $\pm$\tt{3}} & \tt{3} &{\tiny $\pm$\tt{1}} & \tt{2} &{\tiny $\pm$\tt{1}} & \tt{48} &{\tiny $\pm$\tt{12}} \\
 & \tt{task5} & \tt{1} &{\tiny $\pm$\tt{1}} & \tt{6} &{\tiny $\pm$\tt{3}} & \tt{54} &{\tiny $\pm$\tt{4}} & \tt{63} &{\tiny $\pm$\tt{7}} & \tt{22} &{\tiny $\pm$\tt{8}} & \tt{8} &{\tiny $\pm$\tt{3}} & \tt{3} &{\tiny $\pm$\tt{1}} & \tt{3} &{\tiny $\pm$\tt{1}} & \tt{53} &{\tiny $\pm$\tt{3}} \\
 & \tt{overall} & \tt{3} &{\tiny $\pm$\tt{1}} & \tt{17} &{\tiny $\pm$\tt{2}} & \tt{62} &{\tiny $\pm$\tt{2}} & \tt{77} &{\tiny $\pm$\tt{3}} & \tt{35} &{\tiny $\pm$\tt{5}} & \tt{13} &{\tiny $\pm$\tt{1}} & \tt{5} &{\tiny $\pm$\tt{1}} & \tt{5} &{\tiny $\pm$\tt{2}} & \tt{73} &{\tiny $\pm$\tt{4}} \\
\cmidrule{1-20}
\multirow[c]{6}{*}{\tt{cube-single-play-oraclerep-v0}} & \tt{task1} & \tt{7} &{\tiny $\pm$\tt{9}} & \tt{17} &{\tiny $\pm$\tt{5}} & \tt{87} &{\tiny $\pm$\tt{3}} & \tt{97} &{\tiny $\pm$\tt{3}} & \tt{64} &{\tiny $\pm$\tt{6}} & \tt{6} &{\tiny $\pm$\tt{7}} & \tt{4} &{\tiny $\pm$\tt{3}} & \tt{7} &{\tiny $\pm$\tt{7}} & \tt{98} &{\tiny $\pm$\tt{2}} \\
 & \tt{task2} & \tt{8} &{\tiny $\pm$\tt{6}} & \tt{18} &{\tiny $\pm$\tt{12}} & \tt{93} &{\tiny $\pm$\tt{4}} & \tt{96} &{\tiny $\pm$\tt{3}} & \tt{61} &{\tiny $\pm$\tt{8}} & \tt{8} &{\tiny $\pm$\tt{4}} & \tt{5} &{\tiny $\pm$\tt{4}} & \tt{6} &{\tiny $\pm$\tt{12}} & \tt{97} &{\tiny $\pm$\tt{3}} \\
 & \tt{task3} & \tt{9} &{\tiny $\pm$\tt{6}} & \tt{22} &{\tiny $\pm$\tt{5}} & \tt{93} &{\tiny $\pm$\tt{2}} & \tt{99} &{\tiny $\pm$\tt{1}} & \tt{69} &{\tiny $\pm$\tt{9}} & \tt{9} &{\tiny $\pm$\tt{3}} & \tt{1} &{\tiny $\pm$\tt{2}} & \tt{2} &{\tiny $\pm$\tt{3}} & \tt{99} &{\tiny $\pm$\tt{1}} \\
 & \tt{task4} & \tt{6} &{\tiny $\pm$\tt{2}} & \tt{20} &{\tiny $\pm$\tt{6}} & \tt{85} &{\tiny $\pm$\tt{7}} & \tt{92} &{\tiny $\pm$\tt{4}} & \tt{56} &{\tiny $\pm$\tt{10}} & \tt{2} &{\tiny $\pm$\tt{2}} & \tt{3} &{\tiny $\pm$\tt{2}} & \tt{27} &{\tiny $\pm$\tt{17}} & \tt{93} &{\tiny $\pm$\tt{6}} \\
 & \tt{task5} & \tt{5} &{\tiny $\pm$\tt{5}} & \tt{14} &{\tiny $\pm$\tt{7}} & \tt{82} &{\tiny $\pm$\tt{4}} & \tt{89} &{\tiny $\pm$\tt{5}} & \tt{66} &{\tiny $\pm$\tt{18}} & \tt{3} &{\tiny $\pm$\tt{2}} & \tt{0} &{\tiny $\pm$\tt{0}} & \tt{24} &{\tiny $\pm$\tt{21}} & \tt{87} &{\tiny $\pm$\tt{7}} \\
 & \tt{overall} & \tt{7} &{\tiny $\pm$\tt{2}} & \tt{18} &{\tiny $\pm$\tt{5}} & \tt{88} &{\tiny $\pm$\tt{2}} & \tt{95} &{\tiny $\pm$\tt{1}} & \tt{63} &{\tiny $\pm$\tt{5}} & \tt{6} &{\tiny $\pm$\tt{3}} & \tt{3} &{\tiny $\pm$\tt{1}} & \tt{13} &{\tiny $\pm$\tt{8}} & \tt{95} &{\tiny $\pm$\tt{2}} \\
\cmidrule{1-20}
\multirow[c]{6}{*}{\tt{cube-double-play-oraclerep-v0}} & \tt{task1} & \tt{6} &{\tiny $\pm$\tt{5}} & \tt{17} &{\tiny $\pm$\tt{8}} & \tt{88} &{\tiny $\pm$\tt{4}} & \tt{92} &{\tiny $\pm$\tt{6}} & \tt{77} &{\tiny $\pm$\tt{8}} & \tt{7} &{\tiny $\pm$\tt{4}} & \tt{6} &{\tiny $\pm$\tt{2}} & \tt{1} &{\tiny $\pm$\tt{2}} & \tt{73} &{\tiny $\pm$\tt{5}} \\
 & \tt{task2} & \tt{0} &{\tiny $\pm$\tt{0}} & \tt{1} &{\tiny $\pm$\tt{1}} & \tt{78} &{\tiny $\pm$\tt{5}} & \tt{84} &{\tiny $\pm$\tt{7}} & \tt{42} &{\tiny $\pm$\tt{9}} & \tt{0} &{\tiny $\pm$\tt{0}} & \tt{0} &{\tiny $\pm$\tt{0}} & \tt{0} &{\tiny $\pm$\tt{0}} & \tt{23} &{\tiny $\pm$\tt{7}} \\
 & \tt{task3} & \tt{0} &{\tiny $\pm$\tt{0}} & \tt{0} &{\tiny $\pm$\tt{0}} & \tt{75} &{\tiny $\pm$\tt{6}} & \tt{85} &{\tiny $\pm$\tt{2}} & \tt{39} &{\tiny $\pm$\tt{10}} & \tt{0} &{\tiny $\pm$\tt{0}} & \tt{0} &{\tiny $\pm$\tt{0}} & \tt{0} &{\tiny $\pm$\tt{0}} & \tt{30} &{\tiny $\pm$\tt{11}} \\
 & \tt{task4} & \tt{0} &{\tiny $\pm$\tt{0}} & \tt{1} &{\tiny $\pm$\tt{2}} & \tt{8} &{\tiny $\pm$\tt{5}} & \tt{12} &{\tiny $\pm$\tt{9}} & \tt{1} &{\tiny $\pm$\tt{1}} & \tt{0} &{\tiny $\pm$\tt{0}} & \tt{0} &{\tiny $\pm$\tt{0}} & \tt{0} &{\tiny $\pm$\tt{0}} & \tt{3} &{\tiny $\pm$\tt{3}} \\
 & \tt{task5} & \tt{0} &{\tiny $\pm$\tt{0}} & \tt{2} &{\tiny $\pm$\tt{2}} & \tt{47} &{\tiny $\pm$\tt{14}} & \tt{45} &{\tiny $\pm$\tt{8}} & \tt{17} &{\tiny $\pm$\tt{4}} & \tt{0} &{\tiny $\pm$\tt{0}} & \tt{0} &{\tiny $\pm$\tt{0}} & \tt{0} &{\tiny $\pm$\tt{0}} & \tt{18} &{\tiny $\pm$\tt{7}} \\
 & \tt{overall} & \tt{1} &{\tiny $\pm$\tt{1}} & \tt{4} &{\tiny $\pm$\tt{2}} & \tt{59} &{\tiny $\pm$\tt{2}} & \tt{64} &{\tiny $\pm$\tt{4}} & \tt{35} &{\tiny $\pm$\tt{3}} & \tt{1} &{\tiny $\pm$\tt{1}} & \tt{1} &{\tiny $\pm$\tt{0}} & \tt{0} &{\tiny $\pm$\tt{0}} & \tt{30} &{\tiny $\pm$\tt{5}} \\
\cmidrule{1-20}
\multirow[c]{6}{*}{\tt{scene-play-oraclerep-v0}} & \tt{task1} & \tt{14} &{\tiny $\pm$\tt{8}} & \tt{60} &{\tiny $\pm$\tt{6}} & \tt{97} &{\tiny $\pm$\tt{3}} & \tt{99} &{\tiny $\pm$\tt{1}} & \tt{71} &{\tiny $\pm$\tt{13}} & \tt{19} &{\tiny $\pm$\tt{8}} & \tt{43} &{\tiny $\pm$\tt{4}} & \tt{32} &{\tiny $\pm$\tt{8}} & \tt{97} &{\tiny $\pm$\tt{2}} \\
 & \tt{task2} & \tt{2} &{\tiny $\pm$\tt{1}} & \tt{13} &{\tiny $\pm$\tt{3}} & \tt{92} &{\tiny $\pm$\tt{4}} & \tt{96} &{\tiny $\pm$\tt{2}} & \tt{14} &{\tiny $\pm$\tt{4}} & \tt{2} &{\tiny $\pm$\tt{2}} & \tt{9} &{\tiny $\pm$\tt{2}} & \tt{2} &{\tiny $\pm$\tt{2}} & \tt{95} &{\tiny $\pm$\tt{3}} \\
 & \tt{task3} & \tt{2} &{\tiny $\pm$\tt{3}} & \tt{21} &{\tiny $\pm$\tt{9}} & \tt{83} &{\tiny $\pm$\tt{8}} & \tt{89} &{\tiny $\pm$\tt{8}} & \tt{33} &{\tiny $\pm$\tt{9}} & \tt{1} &{\tiny $\pm$\tt{1}} & \tt{3} &{\tiny $\pm$\tt{4}} & \tt{2} &{\tiny $\pm$\tt{3}} & \tt{97} &{\tiny $\pm$\tt{3}} \\
 & \tt{task4} & \tt{3} &{\tiny $\pm$\tt{1}} & \tt{19} &{\tiny $\pm$\tt{8}} & \tt{43} &{\tiny $\pm$\tt{28}} & \tt{13} &{\tiny $\pm$\tt{11}} & \tt{12} &{\tiny $\pm$\tt{3}} & \tt{6} &{\tiny $\pm$\tt{3}} & \tt{2} &{\tiny $\pm$\tt{2}} & \tt{3} &{\tiny $\pm$\tt{1}} & \tt{76} &{\tiny $\pm$\tt{17}} \\
 & \tt{task5} & \tt{0} &{\tiny $\pm$\tt{0}} & \tt{3} &{\tiny $\pm$\tt{4}} & \tt{23} &{\tiny $\pm$\tt{16}} & \tt{10} &{\tiny $\pm$\tt{9}} & \tt{2} &{\tiny $\pm$\tt{2}} & \tt{1} &{\tiny $\pm$\tt{1}} & \tt{0} &{\tiny $\pm$\tt{0}} & \tt{0} &{\tiny $\pm$\tt{0}} & \tt{18} &{\tiny $\pm$\tt{9}} \\
 & \tt{overall} & \tt{4} &{\tiny $\pm$\tt{2}} & \tt{23} &{\tiny $\pm$\tt{2}} & \tt{68} &{\tiny $\pm$\tt{10}} & \tt{61} &{\tiny $\pm$\tt{3}} & \tt{26} &{\tiny $\pm$\tt{5}} & \tt{6} &{\tiny $\pm$\tt{2}} & \tt{12} &{\tiny $\pm$\tt{1}} & \tt{8} &{\tiny $\pm$\tt{2}} & \tt{77} &{\tiny $\pm$\tt{2}} \\
\cmidrule{1-20}
\multirow[c]{6}{*}{\tt{puzzle-3x3-play-oraclerep-v0}} & \tt{task1} & \tt{4} &{\tiny $\pm$\tt{4}} & \tt{10} &{\tiny $\pm$\tt{1}} & \tt{4} &{\tiny $\pm$\tt{2}} & \tt{99} &{\tiny $\pm$\tt{1}} & \tt{15} &{\tiny $\pm$\tt{8}} & \tt{3} &{\tiny $\pm$\tt{3}} & \tt{5} &{\tiny $\pm$\tt{1}} & \tt{7} &{\tiny $\pm$\tt{3}} & \tt{99} &{\tiny $\pm$\tt{1}} \\
 & \tt{task2} & \tt{1} &{\tiny $\pm$\tt{2}} & \tt{1} &{\tiny $\pm$\tt{1}} & \tt{2} &{\tiny $\pm$\tt{2}} & \tt{99} &{\tiny $\pm$\tt{1}} & \tt{6} &{\tiny $\pm$\tt{3}} & \tt{0} &{\tiny $\pm$\tt{0}} & \tt{1} &{\tiny $\pm$\tt{1}} & \tt{2} &{\tiny $\pm$\tt{2}} & \tt{99} &{\tiny $\pm$\tt{1}} \\
 & \tt{task3} & \tt{1} &{\tiny $\pm$\tt{1}} & \tt{1} &{\tiny $\pm$\tt{1}} & \tt{2} &{\tiny $\pm$\tt{1}} & \tt{99} &{\tiny $\pm$\tt{1}} & \tt{1} &{\tiny $\pm$\tt{1}} & \tt{0} &{\tiny $\pm$\tt{0}} & \tt{0} &{\tiny $\pm$\tt{0}} & \tt{1} &{\tiny $\pm$\tt{2}} & \tt{100} &{\tiny $\pm$\tt{0}} \\
 & \tt{task4} & \tt{0} &{\tiny $\pm$\tt{0}} & \tt{1} &{\tiny $\pm$\tt{1}} & \tt{1} &{\tiny $\pm$\tt{1}} & \tt{98} &{\tiny $\pm$\tt{2}} & \tt{1} &{\tiny $\pm$\tt{1}} & \tt{0} &{\tiny $\pm$\tt{0}} & \tt{2} &{\tiny $\pm$\tt{3}} & \tt{1} &{\tiny $\pm$\tt{1}} & \tt{98} &{\tiny $\pm$\tt{1}} \\
 & \tt{task5} & \tt{1} &{\tiny $\pm$\tt{1}} & \tt{2} &{\tiny $\pm$\tt{2}} & \tt{1} &{\tiny $\pm$\tt{1}} & \tt{95} &{\tiny $\pm$\tt{2}} & \tt{2} &{\tiny $\pm$\tt{2}} & \tt{0} &{\tiny $\pm$\tt{0}} & \tt{1} &{\tiny $\pm$\tt{1}} & \tt{0} &{\tiny $\pm$\tt{0}} & \tt{99} &{\tiny $\pm$\tt{1}} \\
 & \tt{overall} & \tt{1} &{\tiny $\pm$\tt{1}} & \tt{3} &{\tiny $\pm$\tt{1}} & \tt{2} &{\tiny $\pm$\tt{1}} & \tt{98} &{\tiny $\pm$\tt{0}} & \tt{5} &{\tiny $\pm$\tt{1}} & \tt{1} &{\tiny $\pm$\tt{1}} & \tt{2} &{\tiny $\pm$\tt{0}} & \tt{2} &{\tiny $\pm$\tt{0}} & \tt{99} &{\tiny $\pm$\tt{0}} \\
\cmidrule{1-20}
\multirow[c]{6}{*}{\tt{puzzle-4x4-play-oraclerep-v0}} & \tt{task1} & \tt{1} &{\tiny $\pm$\tt{1}} & \tt{1} &{\tiny $\pm$\tt{1}} & \tt{6} &{\tiny $\pm$\tt{4}} & \tt{33} &{\tiny $\pm$\tt{14}} & \tt{1} &{\tiny $\pm$\tt{1}} & \tt{0} &{\tiny $\pm$\tt{0}} & \tt{1} &{\tiny $\pm$\tt{1}} & \tt{0} &{\tiny $\pm$\tt{0}} & \tt{47} &{\tiny $\pm$\tt{5}} \\
 & \tt{task2} & \tt{0} &{\tiny $\pm$\tt{0}} & \tt{1} &{\tiny $\pm$\tt{1}} & \tt{4} &{\tiny $\pm$\tt{4}} & \tt{0} &{\tiny $\pm$\tt{0}} & \tt{0} &{\tiny $\pm$\tt{0}} & \tt{0} &{\tiny $\pm$\tt{0}} & \tt{0} &{\tiny $\pm$\tt{0}} & \tt{1} &{\tiny $\pm$\tt{1}} & \tt{17} &{\tiny $\pm$\tt{5}} \\
 & \tt{task3} & \tt{0} &{\tiny $\pm$\tt{0}} & \tt{1} &{\tiny $\pm$\tt{1}} & \tt{6} &{\tiny $\pm$\tt{1}} & \tt{58} &{\tiny $\pm$\tt{10}} & \tt{1} &{\tiny $\pm$\tt{1}} & \tt{0} &{\tiny $\pm$\tt{0}} & \tt{1} &{\tiny $\pm$\tt{1}} & \tt{0} &{\tiny $\pm$\tt{0}} & \tt{38} &{\tiny $\pm$\tt{13}} \\
 & \tt{task4} & \tt{1} &{\tiny $\pm$\tt{1}} & \tt{1} &{\tiny $\pm$\tt{1}} & \tt{6} &{\tiny $\pm$\tt{6}} & \tt{22} &{\tiny $\pm$\tt{5}} & \tt{1} &{\tiny $\pm$\tt{1}} & \tt{0} &{\tiny $\pm$\tt{0}} & \tt{0} &{\tiny $\pm$\tt{0}} & \tt{0} &{\tiny $\pm$\tt{0}} & \tt{34} &{\tiny $\pm$\tt{2}} \\
 & \tt{task5} & \tt{0} &{\tiny $\pm$\tt{0}} & \tt{0} &{\tiny $\pm$\tt{0}} & \tt{6} &{\tiny $\pm$\tt{1}} & \tt{29} &{\tiny $\pm$\tt{9}} & \tt{0} &{\tiny $\pm$\tt{0}} & \tt{0} &{\tiny $\pm$\tt{0}} & \tt{1} &{\tiny $\pm$\tt{1}} & \tt{0} &{\tiny $\pm$\tt{0}} & \tt{32} &{\tiny $\pm$\tt{6}} \\
 & \tt{overall} & \tt{0} &{\tiny $\pm$\tt{0}} & \tt{1} &{\tiny $\pm$\tt{0}} & \tt{5} &{\tiny $\pm$\tt{2}} & \tt{28} &{\tiny $\pm$\tt{4}} & \tt{0} &{\tiny $\pm$\tt{0}} & \tt{0} &{\tiny $\pm$\tt{0}} & \tt{0} &{\tiny $\pm$\tt{0}} & \tt{0} &{\tiny $\pm$\tt{0}} & \tt{34} &{\tiny $\pm$\tt{4}} \\
\bottomrule
\end{tabularew}

}
\end{table}

\clearpage

\section{Experimental Details}
\label{sec:exp_details}

In this section, we describe the full details of our experiments.
We provide the code and instructions at \url{https://github.com/aoberai/trl}.

\subsection{Methods}
\label{sec:methods}

Here, we describe the previous offline GCRL algorithms considered in this work.

\textbf{BC, FBC, IVL~\citep{iql_kostrikov2022, hiql_park2023}, IQL~\citep{iql_kostrikov2022}, CRL~\citep{crl_eysenbach2022}, and QRL~\citep{qrl_wang2023}.}
We employ the original implementations by \citet{ogbench_park2025, sharsa_park2025},
and we refer to these works for the full details.

\textbf{TD and TD-$\bm{n}$.}
The TD (which is equivalent to TD-$1$) and TD-$n$ baselines used in \Cref{sec:exp_long} minimize the following IQL-like objective:
\begin{align}
    L^0(Q) &= \E_{\tau \sim \gD}\left[D\left(Q(s_i, a_i, s_i), \gamma^0\right)\right], \\
    L^1(Q) &= \E_{\tau \sim \gD}\left[D_\kappa \left(Q(s_i, a_i, s_j), \gamma^n \bar Q(s_{i+n}, a_{i+n}, s_j)\right)\right], \\
    L^{\mathrm{TD-}n}(Q) &= L^0(Q) + L^1(Q),
\end{align}
where trajectories $\tau$ are sampled from the same trajectory distribution used in \methodname.
When $i+n > j$, we appropriately clip the value of $n$ (this is omitted in the objective above for notational simplicity).
Intuitively, the objective above can be viewed as the closest TD-based variant (ablation) of \methodname.

\textbf{TDP~\citep{gcrl_kaelbling1993, fwrl_dhiman2018, sgt_jurgenson2020}.}
We refer to triangle-inequality dynamic programming (TDP) as GCRL algorithms
that take a hard maximum over the entire state space to compute an optimal subgoal~\citep{gcrl_kaelbling1993, fwrl_dhiman2018, sgt_jurgenson2020}.
These algorithms were originally designed for tabular environments,
so we consider a sampling-based variant of TDP for our experiments.
Specifically, to compute a hard maximum in our continuous-control environments,
we sample $M$ subgoal candidates from the dataset,
and take the maximum over these $M$ states.
Formally, we minimize the following loss:
\begin{align}
    L^0(Q) &= \E_{s, a \sim \gD}\left[D\left( Q(s, a, s), \gamma^0 \right)\right], \\
    L^1(Q) &= \E_{s, a, s' \sim \gD}\left[D\left( Q(s, a, s'), \gamma^1 \right)\right], \\
    L^\infty(Q) &= \E_{s, a, g^r \sim \gD}\left[D\left( Q(s, a, g^r), \gamma^P \right)\right], \\
    L^\mathrm{\triangle}(Q) &= \E_{s, a, g, W \sim \gD}\left[D\left(
        Q(s, a, g), \max_{(w, a_w) \in W} \bar Q(s, a, w) \bar Q(w, a_w, g)
    \right)\right], \\
    L^\mathrm{TDP}(Q) &= L^0(Q) + L^1(Q) + L^\infty(Q) + L^\mathrm{\triangle}(Q),
\end{align}
where $g^r$ is a randomly sampled goal from the dataset
(note that this distribution is different from that of $g$, as $g$ is typically partially sampled with hindsight relabeling),
$P$ is a tunable hyperparameter (a large number to approximate the distance between a random $(s, g)$ pair~\citep{sgt_jurgenson2020}),
and $W = \{(w^{(i)}, a_w^{(i)})\}_{i=1}^M$ is a set of $M$ randomly sampled state-action pairs from the dataset.

\textbf{COE~\citep{coe_piekos2023}.}
COE is a triangle inequality-based GCRL algorithm that uses a separate generator network
$G(\pl{s}, \pl{a}, \pl{g}): \gS \times \gA \times \gS \to \gS$
to predict the optimal subgoal.
Since this algorithm was originally designed for online GCRL,
we add an additional behavioral regularizer to constrain this generator network to produce in-distribution states,
as commonly done in offline RL~\citep{brac_wu2019, td3bc_fujimoto2021, rebrac_tarasov2023}.
Formally, we minimize the following losses to train the value function and generator:
\begin{align}
    L^1(Q) &= \E_{s, a, s' \sim \gD}\left[D\left( Q(s, a, s'), \gamma^1 \right)\right], \\
    L^\mathrm{\triangle}(Q) &= \E_{\substack{s, a, g \sim \gD, \\ w = G(s, a, g)}}\left[D\left(
        Q(s, a, g), \bar Q(s, a, w) \bar Q(w, \pi(w, g), g)
    \right)\right], \\
    L^\mathrm{COE}(Q) &= L^1(Q) + L^\mathrm{\triangle}(Q), \\
    L^\mathrm{COE}(G) &= \E_{\substack{s, a, g, g^r \sim \gD, \\ w = G(s, a, g)}}\left[
        - \bar Q(s, a, w) \bar Q(w, \pi(w, g), g) - \beta \|w - g^r\|_2^2
    \right],
\end{align}
where $g^r$ is a randomly sampled goal from the dataset,
$\beta$ is a hyperparameter that controls the strength of the regularizer.
In the above, we slightly abuse the notation by assuming that the goal-conditioned policy $\pi$ is deterministic.

\subsection{Oracle Distillation}
In our experiments, we mainly employ the \tt{oraclerep} variants, following \citet{sharsa_park2025}.
In \tt{oraclerep} environments, we are given an ``oracle'' representation of goals
($\phi(\pl{g}): \gS \to \gZ$, where $\gZ$ is an oracle goal representation space),
which typically corresponds to a subset of the state dimensions.
For example, in \tt{humanoidmaze},
a full (goal) state is a $69$-dimensional vector,
but the corresponding oracle representation is a $2$-dimensional vector consisting only of the $x$-$y$ coordinates.
In these environments, we condition the policy and value functions on the oracle representation, not on the full goal state.

While we can easily incorporate this change in standard MC- and TD-based algorithms
by simply parameterizing the policy and value functions with the oracle representation
(\eg, using $\pi(\pl{a} \mid \pl{s}, \phi(\pl{g}))$ instead of $\pi(\pl{a} \mid \pl{s}, \pl{g})$),
it is not straightforward to use $\phi$ in triangle inequality-based methods,
such as QRL and \methodname,
because they assume that states and goals lie in the same space.
To make such methods compatible with oracle representations, we apply a technique we call ``oracle distillation.''
That is, we train an additional oracle representation-conditioned Q function, $Q^\phi: \gS \times \gA \times \gZ \to \sR$,
by distilling it from the original Q function, $Q: \gS \times \gA \times \gS \to \sR$, using the following loss:
\begin{align}
    L^\mathrm{distill}(Q^\phi) = \E_{s, a, g \sim \gD}\left[D\left(Q^\phi(s, a, \phi(g)), Q(s, a, g)\right)\right].
\end{align}
Here, the original Q function assumes that the state and goal spaces are the same and can therefore leverage the triangle inequality.
After training $Q^\phi$, we extract an oracle representation-conditioned policy from this distilled Q function.

\subsection{Implementation Details}

\textbf{Training and evaluation.}
In our experiments, we train all agents for $1$M gradient steps.
For evaluation, we use $15$ episodes for each of the five evaluation goals provided by OGBench~\citep{ogbench_park2025}.
In tables, we report performance averaged over the last three evaluation epochs
(\ie, $800$K, $900$K, and $1$M steps),
following the protocol of OGBench.
However, for the baselines from the work by \citet{sharsa_park2025} (\ie, FBC, IQL, CRL, and SAC+BC in \Cref{table:1b}),
we use the evaluation result only at the $1$M epoch, as they were evaluated every $500$K steps.

In \Cref{table:1b}, we take the performance of FBC, IQL, CRL, and SAC+BC from the work by \citet{sharsa_park2025}.
In \Cref{table:standard}, we re-run all baselines (BC, FBC, IVL, IQL, CRL, and QRL)
using the original OGBench hyperparameters~\citep{ogbench_park2025}
on the \tt{oraclerep} tasks
(except for the value goal hindsight relabeling ratios, which we describe below).
In these tables,
we ensure apples-to-apples comparisons between \methodname and other baselines by using the same default configurations
(\eg, we use the same network size, discount factor, training steps, etc.).

\textbf{Hyperparameters.}
We provide the full list of hyperparameters in \Cref{table:hyp_1b,table:hyp_standard,table:hyp_task}.
For the experiments in \Cref{sec:exp_long},
we mostly follow the hyperparameters by \citet{sharsa_park2025},
and for the experiments in \Cref{sec:exp_standard},
we mostly follow the ones by \citet{ogbench_park2025}.
In the hyperparameter tables,
the tuple $(p^\gD_\mathrm{cur}, p^\gD_\mathrm{geom}, p^\gD_\mathrm{traj}, p^\gD_\mathrm{rand})$
denotes the hindsight goal relabeling ratios described in OGBench~\citep{ogbench_park2025}.

\clearpage

\begin{table}[h!]
\caption{
\footnotesize
\textbf{Hyperparameters for long-horizon OGBench tasks (\Cref{table:1b}).}
}
\vspace{-5pt}
\label{table:hyp_1b}
\begin{center}
\scalebox{0.78}
{
\begin{tabular}{ll}
    \toprule
    \textbf{Hyperparameter} & \textbf{Value} \\
    \midrule
    Gradient steps & $10^6$ \\
    Optimizer & Adam~\citep{adam_kingma2015} \\
    Learning rate & $0.0003$ \\
    Batch size & $1024$ \\
    MLP size & $[1024, 1024, 1024, 1024]$ \\
    Nonlinearity & GELU~\citep{gelu_hendrycks2016} \\
    Target network update rate & $0.005$ \\
    Discount factor $\gamma$ & $0.999$ \\
    Policy $(p^\gD_\mathrm{cur}, p^\gD_\mathrm{geom}, p^\gD_\mathrm{traj}, p^\gD_\mathrm{rand})$ ratio & $(0, 0, 1, 0)$ (\tt{humanoidmaze}), $(0, 0.5, 0, 0.5)$ (\tt{puzzle}) \\
    Value $(p^\gD_\mathrm{cur}, p^\gD_\mathrm{geom}, p^\gD_\mathrm{traj}, p^\gD_\mathrm{rand})$ ratio (TRL, TD, MC) & $(0, 0, 1, 0)$ \\
    Value $(p^\gD_\mathrm{cur}, p^\gD_\mathrm{geom}, p^\gD_\mathrm{traj}, p^\gD_\mathrm{rand})$ ratio (CRL) & $(0, 1, 0, 0)$ \\
    Value $(p^\gD_\mathrm{cur}, p^\gD_\mathrm{geom}, p^\gD_\mathrm{traj}, p^\gD_\mathrm{rand})$ ratio (others) & $(0.2, 0.5, 0, 0.3)$ \\
    Policy extraction & Reparameterized gradients or rejection sampling (see \Cref{table:hyp_task}) \\
    \bottomrule
\end{tabular}
}
\end{center}
\end{table}

\begin{table}[h!]
\caption{
\footnotesize
\textbf{Hyperparameters for standard OGBench tasks (\Cref{table:standard}).}
}
\vspace{-5pt}
\label{table:hyp_standard}
\begin{center}
\scalebox{0.78}
{
\begin{tabular}{ll}
    \toprule
    \textbf{Hyperparameter} & \textbf{Value} \\
    \midrule
    Gradient steps & $10^6$ \\
    Optimizer & Adam~\citep{adam_kingma2015} \\
    Learning rate & $0.0003$ \\
    Batch size & $1024$ \\
    MLP size & $[512, 512, 512]$ \\
    Nonlinearity & GELU~\citep{gelu_hendrycks2016} \\
    Target network update rate & $0.005$ \\
    Discount factor $\gamma$ & $0.99$ (default), $0.995$ (\tt{humanoidmaze}) \\
    Policy $(p^\gD_\mathrm{cur}, p^\gD_\mathrm{geom}, p^\gD_\mathrm{traj}, p^\gD_\mathrm{rand})$ ratio & $(0, 0, 1, 0)$ \\
    Value $(p^\gD_\mathrm{cur}, p^\gD_\mathrm{geom}, p^\gD_\mathrm{traj}, p^\gD_\mathrm{rand})$ ratio (\methodname, CRL) & $(0, 1, 0, 0)$ \\
    Value $(p^\gD_\mathrm{cur}, p^\gD_\mathrm{geom}, p^\gD_\mathrm{traj}, p^\gD_\mathrm{rand})$ ratio (others) & $(0.2, 0.5, 0, 0.3)$ \\
    Policy extraction & Reparameterized gradients (see \Cref{table:hyp_task}) \\
    \bottomrule
\end{tabular}
}
\end{center}
\end{table}

\begin{table}[h!]
\caption{
\footnotesize
\textbf{Task-specific hyperparameters.} 
We describe task-specific hyperparameters below
($\alpha$: BC coefficient for reparameterized gradients,
$N$: sample count for rejection sampling,
$M$: subgoal count,
$P$: random goal distance,
$\beta$: goal regularization coefficient,
$\kappa$: expectile,
$\lambda$: distance-based re-weighting factor).
}
\vspace{-5pt}
\label{table:hyp_task}
\begin{center}
\scalebox{0.78}
{
\begin{tabular}{lccccccccccc}
\toprule
 & \tt{QRL} & \tt{TDP} & \tt{TDP} & \tt{TDP} & \tt{COE} & \tt{COE} & \tt{TD} & \tt{MC} & \tt{TRL} & \tt{TRL} & \tt{TRL} \\
\tt{Environment} & $\alpha$ & $(\alpha, N)$ & $M$ & $P$ & $\alpha$ & $\beta$ & $(\alpha, N)$ & $(\alpha, N)$ & $(\alpha, N)$ & $\kappa$ & $\lambda$ \\
\midrule
\tt{humanoidmaze-giant} & $0.001$ & $(3, -)$ & $8$ & $1000$ & $1$ & $0.5$ & $(0.3, -)$ & $(0.3, -)$ & $(0.1, -)$ & $0.7$ & $0$ \\
\tt{puzzle-4x5} & $3$ & $(-, 32)$ & $8$ & $1000$ & $3$ & $10$ & $(-, 32)$ & $(-, 32)$ & $(-, 32)$ & $0.7$ & $0$ \\
\tt{puzzle-4x6} & $3$ & $(-, 32)$ & $8$ & $1000$ & $3$ & $10$ & $(-, 32)$ & $(-, 32)$ & $(-, 32)$ & $0.7$ & $0$ \\
\midrule
\tt{pointmaze-large} & $0.0003$ & $(5, -)$ & $8$ & $500$ & $1$ & $1$ &  &  & $(10, -)$ & $0.7$ & $0.7$ \\
\tt{antmaze-large} & $0.003$ & $(10, -)$ & $8$ & $100$ & $3$ & $10$ &  &  & $(0.7, -)$ & $0.7$ & $0$ \\
\tt{humanoidmaze-medium} & $0.001$ & $(5, -)$ & $8$ & $500$ & $0.1$ & $3$ &  &  & $(0.1, -)$ & $0.7$ & $0$ \\
\tt{humanoidmaze-large} & $0.001$ & $(5, -)$ & $8$ & $500$ & $0.1$ & $1$ &  &  & $(0.1, -)$ & $0.7$ & $0.1$ \\
\tt{antsoccer-arena} & $0.003$ & $(10, -)$ & $8$ & $200$ & $0.3$ & $1$ &  &  & $(0.3, -)$ & $0.7$ & $0.5$ \\
\tt{cube-single} & $0.3$ & $(5, -)$ & $8$ & $500$ & $0.3$ & $10$ &  &  & $(1, -)$ & $0.7$ & $0.7$ \\
\tt{cube-double} & $0.3$ & $(5, -)$ & $8$ & $500$ & $0.3$ & $10$ &  &  & $(10, -)$ & $0.7$ & $1$ \\
\tt{scene} & $0.3$ & $(1, -)$ & $16$ & $200$ & $1$ & $10$ &  &  & $(1, -)$ & $0.7$ & $1$ \\
\tt{puzzle-3x3} & $0.3$ & $(5, -)$ & $8$ & $500$ & $3$ & $10$ &  &  & $(2, -)$ & $0.7$ & $0.5$ \\
\tt{puzzle-4x4} & $0.3$ & $(5, -)$ & $8$ & $500$ & $1$ & $10$ &  &  & $(2, -)$ & $0.7$ & $2$ \\
\bottomrule
\end{tabular}
}
\end{center}
\end{table}

\end{document}